\documentclass[]{research}

\usepackage{amsmath}
\usepackage{amsfonts}
\usepackage{hyperref}
\usepackage{url}
\usepackage{booktabs}
\usepackage{colortbl}
\usepackage{multirow}
\usepackage{pgfplots}
\usepackage{graphicx}
\usepackage{float}
\usepackage{makecell}
\usepackage{algorithm}
\usepackage{algorithmic}
\usepackage{xcolor}
\renewcommand{\algorithmiccomment}[1]{\hfill \textcolor{gray}{$\triangleright$ \textit{#1}}}
\usepackage{array}
\pgfplotsset{compat=1.18}
\usepackage{arydshln}
\usepackage{wrapfig}
\usepackage{amsthm}
\usepackage{fontawesome5}
\usepackage{tikz}
\usetikzlibrary{positioning, arrows.meta, calc, decorations.pathreplacing, fit, backgrounds, shapes.geometric, shapes.misc}

\newtheorem{theorem}{Theorem}

\makeatletter
\def\adl@drawiv#1#2#3{%
        \hskip.5\tabcolsep
        \xleaders#3{#2.5\@tempdimb #1{1}#2.5\@tempdimb}%
                #2\z@ plus1fil minus1fil\relax
        \hskip.5\tabcolsep}
\newcommand{\cdashlinelr}[1]{%
  \noalign{\vskip\aboverulesep
           \global\let\@dashdrawstore\adl@draw
           \global\let\adl@draw\adl@drawiv}
  \cdashline{#1}
  \noalign{\global\let\adl@draw\@dashdrawstore
           \vskip\belowrulesep}}
\makeatother

\newcommand{\Method}{\text{Matrix-to-Matrix RNN}}
\newcommand{\method}{M\ensuremath{\mathrm{^2}}RNN}
\newcommand{\mamba}{\text{Mamba-2}}
\newcommand{\gdn}{\text{Gated DeltaNet}}

\newcommand{\bb}[1]{\textbf{#1}}

\newcommand{\blue}[1]{\textcolor{blue}{#1}}

\newcommand{\tc}{\textsf{TC$^0$}}
\newcommand{\nc}{\textsf{NC$^1$}}

\definecolor{secblue}{HTML}{DCEAF7}
\definecolor{secgreen}{HTML}{DFF0D8}
\definecolor{secorange}{HTML}{FDF2E0}
\definecolor{secpurple}{HTML}{EDE4F3}
\definecolor{myblue}{HTML}{EEF4FF}

\title{\method: Non-Linear RNNs with Matrix-Valued States for Scalable Language Modeling}
\author[12]{Mayank Mishra}
\author[3]{Shawn Tan}
\author[1]{Ion Stoica}
\author[1*]{Joseph E. Gonzalez}
\author[45*]{Tri Dao}

\affiliation[1]{UC Berkeley}
\affiliation[2]{Learning Machine}
\affiliation[3]{MIT-IBM Watson Lab}
\affiliation[4]{Princeton University}
\affiliation[5]{Together AI}

\contribution[*]{Equal Mentoring}

\abstract{Transformers are highly parallel but are limited to computations in the \tc~complexity class, excluding tasks such as entity tracking and code execution that provably require greater expressive power. Motivated by this limitation, we revisit non-linear Recurrent Neural Networks (RNNs) for language modeling and introduce \Method~(\method): an architecture with matrix-valued hidden states and expressive non-linear state transitions. We demonstrate that the language modeling performance of non-linear RNNs is limited by their state size, and show how the state size expansion mechanism enables efficient use of tensor cores. Empirically, \method~achieves perfect state tracking generalization at sequence lengths not seen during training. These benefits also translate to large-scale language modeling. In hybrid settings that interleave recurrent layers with attention, Hybrid \method~outperforms equivalent \gdn~hybrids by $0.4$--$0.5$ perplexity points on a 7B MoE model, while using $3\times$ smaller state sizes for the recurrent layers. Notably, replacing even a single recurrent layer with \method~in an existing hybrid architecture yields accuracy gains comparable to Hybrid \method~with minimal impact on training throughput. Further, the Hybrid \gdn~models with a single \method~layer also achieve superior long-context generalization, outperforming state-of-the-art hybrid linear attention architectures by up to $8$ points on LongBench. Together, these results establish non-linear RNN layers as a compelling building block for efficient and scalable language models.
}

\correspondence{Mayank Mishra @ \email{mayank@learning-machine.ai}}

\metadata[{\faGithub}~Training Code]{\url{https://github.com/open-lm-engine/lm-engine}}
\metadata[{\faGithub}~Kernels]{\url{https://github.com/open-lm-engine/accelerated-model-architectures}}
\metadata[{\hspace{-0.1em}\raisebox{-0.2em}{\includegraphics[height=1.2em]{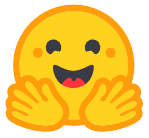}}}~Models]{\url{https://huggingface.co/collections/open-lm-engine/m2rnn}}

\begin{document}

\maketitle

\section{Introduction}
Foundation models trained on massive datasets and adapted to downstream tasks have become the dominant paradigm in modern machine learning \cite{gpt3, palm, llama3}. At their core, these models rely on sequence-based architectures capable of processing diverse modalities including text, code, images, video, and speech \cite{gpt3, palm, llama3, mishra2024granite, dosovitskiy2020image, lugosch2019speech}. The prevailing architecture is the decoder-only Transformer, composed of alternating multi-layer perceptron (MLP) and causal attention layers \cite{attention}. Much of attention's success stems from its amenability to parallelization across the sequence length, the availability of hardware-efficient implementations \cite{dao2022flashattention, dao2023flashattention, shah2024flashattention, zadouri2026flashattention4algorithmkernelpipelining}, and its effective in-context retrieval capabilities. Attention-based Transformers have also demonstrated remarkable scalability, powering models ranging from hundreds of billions \cite{gpt3, palm, workshop2022bloom, megatron-turing, liu2024deepseek, yang2025qwen3} to trillions of parameters \cite{team2025kimi, meta2024llama4}.

However, attention's quadratic time complexity during training and linearly growing memory requirements during inference have motivated the development of more efficient alternatives \cite{tay2022efficienttransformerssurvey}. State Space Models (SSMs) and linear attention \cite{gu2021efficiently, gu2021combining, mamba1, mamba2, gla, deltanet, gdn, rwkv} have emerged as promising replacements. \citet{katharopoulos2020transformers} propose computing linear attention via a dot product of kernel features, \citet{peng2021random} approximate attention using random feature methods, \citet{deltarule} introduce the delta update rule, showing significant improvements in associative recall, and \citet{mamba1, mamba2} propose Mamba with an efficient training algorithm that scales linearly with sequence length.

Linear attention admits both a recurrent and a parallel formulation. The constant-state size for linear attention enables memory-efficient and fast autoregressive inference (using the recurrent form) while the standard parallel form retains quadratic complexity during training. To bridge this gap, a chunkwise parallel formulation can be employed \cite{hua2022transformer, gla}, preserving linear time complexity while batching computations within chunks to leverage matrix multiplication units (tensor cores on NVIDIA GPUs \cite{nvidia_ptx}). SSMs like Mamba-1 \cite{mamba1} can alternatively be accelerated via the parallel scan algorithm \cite{parallelscan, BlellochTR90}.

Although linear RNNs have attracted considerable attention due to their efficient inference and training, they present several notable limitations:
\begin{enumerate}
    \item \textbf{Limited state tracking}: Linear RNNs are provably less expressive than non-linear RNNs, particularly for hard state-tracking tasks such as code evaluation, entity tracking, and permutation composition ($S_5$ group permutation task). Consequently, they are no more expressive than Transformers \cite{merrill2024illusion}.
    \item \textbf{Poor in-context retrieval performance}: Linear RNNs exhibit weak performance in in-context retrieval tasks \cite{gla,deltanet,gdn}. Because their recurrent state is updated via a fixed-rank outer product, the state can be overwritten when the number of key-value associations exceeds its capacity, leading to degraded in-context retrieval performance. This manifests as poor performance on needle-in-a-haystack benchmarks and long-context recall tasks that require retrieving specific information from distant context.
\end{enumerate}

A common strategy is to interleave quadratic attention layers with sub-quadratic recurrent layers in order to achieve a balance between inference efficiency and downstream task performance in state-of-the-art production models. This alleviates the poor language modeling and long-context performance of linear RNNs while retaining their efficiency advantages for long-context training and inference \cite{blakeman2025nvidia, lieber2024jamba, ren2024samba}.

While non-linear RNNs have been shown to excel at hard state tracking problems \cite{merrill2024illusion}, they face several practical challenges:
\begin{enumerate}
    \item \textbf{Poor language modeling performance}: Current non-linear RNNs significantly underperform linear RNNs in language modeling. We show that large state sizes are critical for strong language modeling performance, and that the inferior results of non-linear RNNs are primarily due to their smaller state sizes rather than their non-linearity.
    \item \textbf{Poor long-context retrieval performance}: We further show that the smaller state sizes significantly hurt the retrieval and long-context performance. The vector-valued non-linear RNNs in our experiments consistently underperform \mamba~and \gdn~by $\approx 20$ points in real-world retrieval tasks (Tables \ref{table:410m-real-retrieval}, \ref{table:7b-real-retrieval}).
    \item \textbf{Training inefficiency}: Unlike linear RNNs, non-linear RNNs cannot be parallelized across the sequence length, making training significantly more expensive. This is compounded by poor hardware utilization: na\"{\i}ve recurrent GEMM execution prevents reuse of on-chip tiles across time steps, forcing costly HBM I/O and synchronization. Even block-wise variants \cite{flashrnn} remain constrained by micro-batch tiling, leading to padding overhead and wasted FLOPs.
\end{enumerate}

We propose \Method~(\method), a non-linear RNN architecture that, when combined with hybrid linear RNNs, results in significant performance improvements in both language modeling and long-context performance. Specifically, \method~layers address the following key limitations of linear RNNs and non-linear RNNs:
\begin{enumerate}
    \item \textbf{Improved state tracking}: \Method~(\method) layer achieves perfect state tracking, significantly improving state tracking performance over linear RNNs similar to non-linear RNNs and GRUs \cite{gru}. We further demonstrate that \method~layers can express all computations that can be performed by a non-linear RNN.

    \item \textbf{Improved language modeling, in-context retrieval and long-context performance}: We demonstrate the importance of large state sizes for recurrent models and that it is a key driver behind the success of linear RNNs such as \mamba~\cite{mamba2} and \gdn~\cite{gdn}. We show that the outer product state expansion mechanism is directly applicable to non-linear RNNs. Similar to LSTMs \cite{lstm} and GRUs \cite{gru}, we use a forget gate to prevent gradient degradation across time steps \cite{lstmgruanalysis}, but unlike LSTMs and GRUs, our forget gate is independent of the recurrent state, enabling parallel computation. Hybrid \method~models outperform Hybrid \mamba~and Hybrid \gdn~on language modeling benchmarks. The outer product state expansion mechanism also improves in-context retrieval and long-context performance which are sensitive to the state size. Further, we demonstrate that combining \method~layers with Hybrid \mamba~and Hybrid \gdn~enhances long-context performance by up to 8 points at both 410M dense and 7B MoE scales (Tables \ref{table:410m-longbench}, \ref{table:7b-longbench}).

    \item \textbf{Improved hardware utilization}: The outer product based state expansion mechanism enables efficient tensor core utilization without the wasted FLOPs of FlashRNN \cite{flashrnn} due to padding along the batch dimension (Section \ref{sec:hardware-utilization}). Since \method~layers are computationally expensive and non-parallelizable across time steps, we further explore using \method~layers sparingly. We demonstrate with experiments that using \method~layers sparingly (even just one layer) results in substantial improvements in all benchmarks.
\end{enumerate}

Beyond efficient single-device execution, training models at scale requires careful attention to systems design. We describe the forward and backward kernels for \method~layers implemented in Triton \cite{triton}, and present two strategies for applying tensor parallelism (TP) to \method~layers: a topology-aware approach that requires no additional communication, and a topology-independent approach that preserves parameter count across TP configurations at the cost of extra synchronization (Section \ref{sec:tensor-parallel}).

\section{Background}
We use bold letters ($\bb{A}$) to represent matrices and small letters ($a$) to represent column vectors in the following sections.

\subsection{State Tracking}
State tracking refers to determining the state of a system by observing a sequence of updates applied to it. The complexity of state tracking tasks is characterized by circuit complexity classes:

\tc: \tc~denotes the class of languages decidable by constant-depth threshold boolean circuits of polynomial size. Simple state tracking problems such as parity (equivalent to the permutation group $S_2$) fall within this class. \citet{merrill-etal-2022-saturated} show that Transformers \cite{attention} lie in \tc, making them capable of solving only these simpler tasks.

\nc: \nc~denotes the class of languages decidable by logarithmic-depth boolean circuits of polynomial size\footnote{Depth is measured in the input size.}. Many practical state tracking tasks such as tracking chess moves or evaluating programs reduce to problems at least as hard as the $S_5$ word problem, where $S_k$ denotes the symmetric group on $k$ elements (i.e., all permutations of $k$ objects). Since $S_5$ is \nc-complete, these tasks lie strictly outside the expressivity class of both Transformers \cite{merrill-etal-2022-saturated} and linear SSMs with input-independent or diagonal transition matrices \cite{merrill2024illusion}, motivating architectures with greater computational expressivity.

\subsection{Linear RNNs}

Both linear attention mechanisms and SSMs admit recurrent and parallel formulations, leading the literature to refer to them collectively as \emph{recurrent neural networks} (RNNs). However, \citet{merrill2024illusion} demonstrate a significant expressivity gap between these \emph{linear} RNNs and \emph{non-linear} RNNs. Non-linear RNNs, recurrent networks that apply a non-linear activation at each step of the recurrence, have traditionally been employed for sequence modeling \cite{elman1990finding}, particularly for tasks requiring state tracking across time steps. While the inclusion of a non-linearity yields a much richer class of representable functions, it precludes the use of algorithms like parallel scan \cite{parallelscan,  BlellochTR90} for efficient computation \cite{merrill2024illusion}. Nevertheless, the state-tracking capabilities enabled by non-linear RNNs can make their incurred cost a reasonable tradeoff.

Many linear attention and SSM models take the following general form where Equation \ref{equation:linear-attention-transition} is used to compute the recurrent state at the next time step and Equation \ref{equation:linear-attention-output-readout} is used to compute the final output:
\begin{align}
    \bb{H}_t &= \bb{H}_{t-1} \bb{A}(x_t) + \bb{B}(x_t) \label{equation:linear-attention-transition} \\
    y_t &= \bb{H}_{t}^\top c(x_t) + d(x_t) \label{equation:linear-attention-output-readout}
\end{align}
where $\bb{H}_t \in \mathbb{R}^{K \times V}$ denotes the recurrent state at time step $t$, $\bb{A}(\bb{x}_t) \in \mathbb{R}^{V \times V}$ is the transition matrix, and $\bb{B}(x_t) \in \mathbb{R}^{K \times V}$ is a function of the input $x_t \in \mathbb{R}^d$. $c(x_t) \in \mathbb{R}^{K \times 1}$ is the vector used to compute the output and $d(x_t) \in \mathbb{R}^{V \times 1}$ is the residual stream. $K$ and $V$ represent the key and value head dimensions respectively. Note that the recurrent state $\bb{H}_t$ is a matrix rather than a vector. \citet{katharopoulos2020transformers} replace the similarity score in attention ($\exp(q_i^\top k_j)$) \cite{attention} with a kernel $\kappa(q_i, k_j) = \left< \phi(q_i), \phi(k_j) \right>$ to compute the similarity scores between queries ($q_i$) and keys ($k_j$) where $\phi$ represents the feature map for the kernel function $\kappa$. This leads to the following simplified recurrence:
\begin{align}
    \bb{H}_t &= \bb{H}_{t-1} + \phi(k_t) {v_t}^\top \\
    z_t &= z_{t-1} + \phi(k_t) \\
    y_t &= \frac{\bb{H}_t^\top \phi(q_t)}{z_t^\top \phi(q_t)} \label{eqn:linearattn-full}
\end{align}
Prior works found that removing the normalization term in the denominator ($z_t^\top \phi(q_t)$) in Equation \ref{eqn:linearattn-full} improves training stability \cite{sun2023retentive} for linear attention and hence we omit this in going forward. The recurrence then becomes:
\begin{align}
    \bb{H}_t &= \bb{H}_{t-1} + \phi(k_t) {v_t}^\top \\
    y_t &= \bb{H}_t^\top \phi(q_t) \label{eqn:linearattn}
\end{align}

A comprehensive summary of recurrent forms for various linear attention models can be found in \citet{deltanet} (Table 2).

\subsection{Limitations of Linear RNNs}
\subsubsection{Limited State Tracking}
\citet{merrill2024illusion} demonstrate that hard state tracking tasks like the permutation group $S_5$ cannot be solved when the transition matrix for a linear RNN is input-independent ($\bb{A}(x_t) = \bb{A}$) or diagonal ($\bb{A}(x_t) = \text{diag}(a(x_t))$). The authors further suggest that input-dependent, non-diagonal transition matrices can overcome this limitation and can simulate \emph{deterministic finite-state automata} (DFAs). \citet{grazzi2024unlocking} extend this result, demonstrating that SSMs whose transition matrices $\bb{A}(x_t)$ have only positive eigenvalues (typical in Mamba and \gdn) cannot even solve simple state tracking problems like parity. \citet{grazzi2024unlocking} evaluate Mamba-1 \cite{mamba1} and \gdn~\cite{gdn} on $S_3$ and $S_5$ permutation groups. They demonstrate that Mamba-1 and \gdn~can solve the tasks when the eigenvalues of $\bb{A}$ are allowed to be negative (specifically between $[-1, 1]$). \citet{siems2025deltaproduct} further demonstrate that allowing a product of $k - 1$ HouseHolder matrices further increases the expressivity and allows solving tasks like $S_k$. \citet{terzic2025structured} take an alternative approach via structured sparse matrices combined with straight-through gradient estimators, which preserves compatibility with parallel scan \cite{parallelscan,BlellochTR90} while improving expressivity.

\subsubsection{Poor in-context retrieval performance}
Linear RNNs and SSMs significantly underperform parameter-matched Transformer \cite{attention} models on in-context retrieval tasks \cite{gla,deltanet,gdn}. While recent linear RNNs such as DeltaNet \cite{deltanet} and \gdn~\cite{gdn} have closed this performance gap, there still remains a significant gap between Transformers \cite{attention} and linear RNNs. We experimentally demonstrate this further in Section \ref{section:in-context-retrieval-real} (Tables \ref{table:410m-real-retrieval}, \ref{table:7b-real-retrieval}).

\subsection{Non-Linear RNNs}
The most basic non-linear RNN \citep{elman1990finding} takes the form:
\begin{align}
    h_t &= \tanh(\bb{W} h_{t-1} + x_t)\\
    y_t &= h_t
\end{align}
where $h_t \in \mathbb{R}^{d \times 1}$ is the hidden state at time step $t$, $x_t \in \mathbb{R}^{d \times 1}$ is the input, and $\bb{W} \in \mathbb{R}^{d\times d}$ is the (dense, input-independent) transition matrix.

A key challenge for deep non-linear RNNs is vanishing and exploding gradients, which makes learning long-range dependencies difficult \cite{279181}. The Long Short-Term Memory (LSTM) \cite{lstm} and Gated Recurrent Unit (GRU) \cite{cho2014learning} address this by introducing multiplicative gating mechanisms.

\paragraph{Expressivity.} \citet{merrill2019sequential} analyze non-linear RNNs under finite precision and show that they are equivalent to finite-state automata, capable of recognizing all regular languages. Under infinite precision, saturated non-linear RNNs are strictly more expressive, capable of simulating arbitrary deterministic finite-state automata (DFAs) and thus solving hard state-tracking tasks such as permutation group composition that lie outside \tc~\cite{merrill2024illusion}. This expressivity advantage over linear RNNs (which are confined to \tc~under diagonal or input-independent transitions) motivates their use in architectures targeting tasks that require stateful computation beyond what Transformers can express.

\subsection{Limitations of Non-Linear RNNs}
\label{sec:nonlinear-rnn-limitations}
\subsubsection{Poor language modeling performance}
Despite their expressivity advantages, non-linear RNNs like LSTMs \cite{lstm} and GRUs \cite{gru} significantly underperform both Transformers \cite{attention} and modern linear RNNs on language modeling benchmarks. A natural hypothesis is that non-linearity itself is the bottleneck, but we argue the gap is largely attributable to state size. Vector-valued non-linear RNNs maintain a hidden state $h_t \in \mathbb{R}^d$, which is far smaller than the matrix-valued states $\bb{H}_t \in \mathbb{R}^{K \times V}$ used by linear attention models (Section \ref{section:state-size}) for the same number of model parameters. Naively increasing $d$ to match this capacity is impractical: the transition matrix $\bb{W} \in \mathbb{R}^{d \times d}$ grows quadratically in parameters, and for large $d$ it can no longer fit in the shared memory of a single streaming multiprocessor (SM), eliminating the possibility of on-chip reuse.

\subsubsection{Poor in-context retrieval performance}
Beyond language modeling, non-linear RNNs also struggle at in-context retrieval tasks. The fundamental bottleneck is again the state size: a vector-valued hidden state $h_t \in \mathbb{R}^d$ must compress the entire context into a fixed-size representation. As sequence length grows, the state becomes increasingly overloaded, making it difficult to retrieve specific key-value associations stored earlier in the context. This stands in contrast to Transformers, which retain all past tokens in an explicit key-value cache and can attend directly to any position. While linear RNNs with matrix-valued states fare better on retrieval due to their larger effective state capacity, they too lag behind Transformers on in-context retrieval and recall-intensive benchmarks \cite{gla, deltanet, gdn} (Section \ref{section:in-context-retrieval-real}). Non-linear RNNs, constrained to vector-valued states, exhibit an even larger retrieval deficit.

\subsubsection{Training Inefficiency}
\label{sec:training-inefficiency}
\paragraph{Non-Parallelizability on Sequence Length.} The element-wise non-linearity in non-linear RNNs breaks the associativity required by the parallel scan algorithm \cite{parallelscan,BlellochTR90}, forcing sequential computation over the sequence length. \citet{danieli2025pararnn} show that non-linear RNNs can be parallelized by casting the recurrence as a system of $L$ ($L$ denotes the sequence length) non-linear equations and solving it via Newton's method \cite{newton}. However, storing the Jacobian requires substantial memory and the recurrence becomes computationally expensive, rendering the approach impractical at scale.
To make it feasible, \citet{danieli2025pararnn} impose diagonal structure on the transition matrix, which eliminates cross-neuron mixing and substantially reduces expressivity. Moreover, each forward pass requires multiple Newton iterations, and the number of iterations needed for convergence is not known a priori for a given architecture. Consequently, executing the full sequential recurrence may ultimately be more efficient than iterative Newton updates.

\paragraph{Expensive Computation due to Back-to-back GEMM for Non-Linear RNNs.} Computing the GEMM (General Matrix Multiply) $\bb{W} h_{t-1}$ on a GPU proceeds by tiling the output so that each output tile is assigned to a different streaming multiprocessor (SM). This works well for a single GEMM, as each SM can independently compute its tile without cross-SM communication. However, the tiling becomes problematic at the next time step when computing $\bb{W} h_t$: the output tiles from time step $t-1$ become the input tiles for time step $t$, but they reside on different SMs than where they are needed. Ideally, the SM that produced a given output tile would reuse it directly as input, but the standard tiling prevents this. As a result, intermediate results must be written to and reloaded from global memory (HBM), introducing costly synchronization and data movement. This issue is similar to what one would encounter when fusing two back-to-back GEMMs for MLP computation.

To address this, FlashRNN \cite{flashrnn} modify the RNN recurrence into a block-wise form, analogous to multi-head attention. This design is also adopted by xLSTM \cite{beck2024xlstm}. By keeping the per-block state size small enough to reside in SM shared memory, they avoid HBM-level synchronization and achieve significant speedups.

\paragraph{Poor Hardware Utilization due to Wasted FLOPs.} FlashRNN \cite{flashrnn} propose to parallelize a vector-valued non-linear recurrence over the batch dimension ($B$) and the number of heads ($N$). However, to enable the use of tensor cores on NVIDIA GPUs, \citet{flashrnn} propose to pad the batch dimension $B$ to $B_\textrm{padded}$ ($16$ for using the WMMA instruction) \cite{nvidia_ptx}. This leads to a GEMM computation with shape $\bb{M, N, K} = B_\textrm{padded}, d, d$ at each time step.

In practice, per-GPU batch sizes of $16$ or more are rarely feasible when training large models due to GPU memory limitations (80GB on NVIDIA H100s), leading to substantial wasted FLOPs due to padding. For example, at a batch size of $B = 4$, padding wastes 75\% of the recurrence FLOPs. In Section \ref{sec:hardware-utilization}, we describe how the \method~recurrence completely avoids the FLOPs wasted due to padding.

\section{\Method}
In this section, we describe \Method~(\method), our proposed non-linear RNN layer, which addresses the poor language modeling, poor retrieval performance and poor hardware utilization of vector-valued non-linear RNNs while also addressing the limited state tracking problem of linear RNNs.

\subsection{\method~Layer}
To overcome the state size limitations of traditional RNNs, we develop \Method~(\method), which uses matrix-valued hidden states. We adopt an outer product state expansion strategy similar to that of linear attention \cite{katharopoulos2020transformers} and SSMs \cite{mamba1, mamba2}, significantly increasing the state size without significantly increasing the number of parameters. We further incorporate a forget gate which is independent of the recurrent state and can be computed in parallel. We found that adding a residual connection around the \method~recurrence similar to SSMs helps to improve gradient flow and mitigate the compounding effect of the $\tanh$ activation across layers. The forward computation for \method~recurrence is:
\begin{align}
    \label{equation:recurrence}
    \bb{Z}_t &= \tanh(\bb{H}_{t-1} \bb{W} + k_t v_t^\top) \\
    \bb{H}_t &= f_t \bb{H}_{t-1} + (1 - f_t) \bb{Z}_t \\
    y_t &= \bb{H}_t^\top q_t + w_r \odot v_t
\end{align}
where $\bb{H}_{t-1} \in \mathbb{R}^{K \times V}$ denotes the recurrent state at the previous time step. We initialize the initial state $\bb{H}_0 = 0_{K \times V}$ (matrix of shape $K \times V$ filled with 0s). $\bb{Z}_t \in \mathbb{R}^{K \times V}$ denotes the state update and $\bb{H}_t$ denotes the state at the current time step after the update. $\bb{W} \in \mathbb{R}^{V \times V}$ is the transition matrix for the recurrence (independent of the input) while the query, key, value and forget gate ($q_t, k_t \in \mathbb{R}^{K \times 1}, v_t \in \mathbb{R}^{V \times 1}, f_t \in [0, 1]$) are functions of the input $x_t \in \mathbb{R}^{d \times 1}$. $y_t \in \mathbb{R}^{V \times 1}$ represents the final output of the recurrence after the query readout and residual addition and $w_r$ represents the weight for the residual.

\subsubsection{Forget Gate}
\label{sec:forget-gate}
Following LSTMs \cite{lstm} and GRUs \cite{gru}, we augment \method~with a forget gate to mitigate the vanishing gradient problem. We use a scalar-valued (per-head) forget gate $f_t \in [0, 1]$. While we experimented with a vector-valued forget gate of size $K$ per head, it yielded only marginal improvements while substantially increasing the parameter count. The gated recurrent update is
$\bb{H}_t = f_t \bb{H}_{t-1} + (1 - f_t) \bb{Z}_t$,
where $f_t = 1$ fully preserves the current state and $f_t = 0$ resets it entirely. Although the importance of forget gates is well established in the literature \cite{lstm, gru, lstmgruanalysis}, we provide confirmatory ablations in Section \ref{section:state-size}.
Note that the forget gate ($f_t$) is only a function of the input $x_t$ and is independent of the previous state $\bb{H}_{t-1}$ unlike LSTMs \cite{lstm} and GRUs \cite{gru}. The forget gate ($f_{t,n}$)\footnote{We drop the notation for head $f_{t,n}$ in the rest of the paper and use $f_t$ for simplicity.} (forget gate value for $n^{th}$ head at time step $t$) for \method~layers is computed as:
\begin{align}
    f_{t,n} = \psi(x_t) = \frac{1}{(1 + e^{x_t + \beta_n})^{\alpha_n}}
\end{align}
\begin{figure}[h!]
    \centering
    \includegraphics[width=0.5\textwidth]{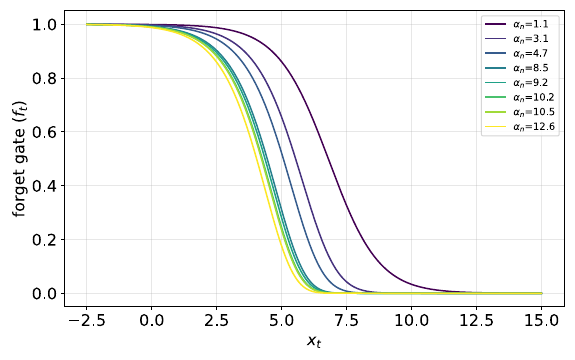}
    \caption{Forget gate ($f_t$) behavior as a function of the input ($x_t$) for different values of $\alpha_n$ and $\beta_n$. $n$ denotes the $n^\textrm{th}$ head.}
    \label{fig:decay}
\end{figure}

where $\alpha_n$ and $\beta_n$ are learnable parameters that are independent of the input.
A smaller value of $\alpha_n$ for the $n^{\text{th}}$ head results in a slower decay rate.
The parameter $\beta_n$ controls the location at which the forget gate saturates (see Figure \ref{fig:decay}). A similar gate is used in Mamba-1 \cite{mamba1}, \mamba~\cite{mamba2} and \gdn~\cite{gdn}. Each head is initialized to a different value of $\alpha_n$ and $\beta_n$, which leads to different forgetting characteristics. We initialize $\alpha_n \sim \textrm{Uniform}(\alpha_\textrm{min}, \alpha_\textrm{max})$ and $\beta_n \sim \textrm{LogUniform}(\beta_\textrm{min}, \beta_\textrm{max})$. Note that when $\alpha_n = 1$, the forget gate simplifies to $\psi(x_t) = \sigma(-x_t - \beta_n)$, which corresponds to a sigmoid with a reflected input and a bias shift.

\subsubsection{\method~Block}
The queries, keys, and values ($q_t, k_t, v_t$) are computed via linear projections followed by a depthwise causal short-convolution (kernel size 4) and SiLU activation \cite{hendrycks2016gaussian, elfwing2018sigmoid}. The output gate $g_t$ uses SiLU without a convolution, and the forget gate $f_t$ uses the parameterized function from Section \ref{sec:forget-gate} to ensure $f_t \in [0, 1]$. Crucially, all pre-recurrence projections depend only on the input $x_t$ and can therefore be computed in parallel across the sequence. The full \method~layer (for a single head) is:
\begin{align}
    \label{equation:layer}
    q_t &= \textrm{SiLU(conv1d}(\bb{W}_q x_t + b_q)) &&\in \mathbb{R}^{K \times 1} \\
    k_t &= \textrm{SiLU(conv1d}(\bb{W}_k x_t + b_k)) &&\in \mathbb{R}^{K \times 1} \\
    v_t &= \textrm{SiLU(conv1d}(\bb{W}_v x_t + b_v)) &&\in \mathbb{R}^{V \times 1} \\
    f_t &= \psi(\bb{W}_f x_t) &&\in [0, 1] \\
    g_t &= \textrm{SiLU}(\bb{W}_g x_t) &&\in \mathbb{R}^{V \times 1} \\
    \bb{Z}_t &= \tanh(\bb{H}_{t-1} \bb{W} + k_t v_t^\top) &&\in \mathbb{R}^{K \times V} \\
    \bb{H}_t &= f_t \bb{H}_{t-1} + (1 - f_t) \bb{Z}_t &&\in \mathbb{R}^{K \times V} \\
    y_t &= \bb{H}_t^\top q_t + w_r \odot v_t &&\in \mathbb{R}^{V \times 1} \\
    y_{gt} &= \textrm{RMSNorm}(y_t \odot g_t) &&\in \mathbb{R}^{V \times 1} \\
    o_t &= \bb{W}_o y_{gt} &&\in \mathbb{R}^{d \times 1} \\
\end{align}
where $x_t \in \mathbb{R}^d$.

Following Mamba-1 \cite{mamba1}, \mamba~\cite{mamba2}, and \gdn~\cite{gdn}, we adopt a hybrid architecture combining the H3 block \cite{fu2022hungry} with a gated MLP, a design that has proven effective in production LLMs (Figure \ref{fig:layer}).

\begin{figure*}[h!]
    \centering
    \begin{tikzpicture}[
    font=\small,
    >={Stealth[length=1.6mm,width=1.4mm]},
    line width=0.4pt,
    block/.style={
        rectangle, rounded corners=1.5pt, draw=black,
        minimum width=14mm, minimum height=5mm, align=center,
        line width=0.5pt
    },
    norm/.style={block, fill=cyan!25},
    method/.style={block, fill=violet!35},
    linear/.style={block, fill=blue!15, minimum width=12mm, minimum height=5mm},
    linstack/.style={block, fill=blue!15, minimum width=12mm, minimum height=5mm},
    conv/.style={block, fill=red!25, minimum width=18mm, minimum height=10mm},
    convstack/.style={block, fill=red!25, minimum width=18mm, minimum height=10mm},
    actsilu/.style={
        circle, draw, fill=violet!18, inner sep=0pt, minimum size=4mm,
        path picture={
            \draw[line width=0.3pt]
                ($(path picture bounding box.west)+(0.3mm,-0.5mm)$)
                -- ($(path picture bounding box.center)+(0,-0.5mm)$)
                .. controls +(0.6mm,0) and +(-0.6mm,-0.4mm) ..
                ($(path picture bounding box.east)+(-0.3mm,0.7mm)$);
        }
    },
    actpsi/.style={circle, draw, fill=violet!18, inner sep=0pt, minimum size=4mm, font=\scriptsize},
    op/.style={circle, draw, inner sep=0pt, minimum size=3.6mm, font=\scriptsize},
    panel/.style={draw=none, inner sep=0pt},
    arr/.style={->, line width=0.45pt},
    lab/.style={font=\scriptsize, inner sep=1pt}
]

\node[norm] (lnorm1) at (0,-5mm) {Norm};
\node[method, above=3mm of lnorm1] (lm2) {M$^2$RNN Layer};
\node[op, above=3mm of lm2] (ladd1) {+};
\node[norm, above=5mm of ladd1] (lnorm2) {Norm};
\node[method, above=3mm of lnorm2] (lmlp) {MLP/MoE};
\node[op, above=3mm of lmlp] (ladd2) {+};

\draw[arr] (lnorm1) -- (lm2);
\draw[arr] (lm2) -- (ladd1);
\draw[arr] (ladd1) -- (lnorm2);
\draw[arr] (lnorm2) -- (lmlp);
\draw[arr] (lmlp) -- (ladd2);

\draw[arr] ($(lnorm1.south)+(0,-7mm)$) -- (lnorm1);
\draw[arr] (ladd2.north) -- ++(0,7mm);

\coordinate (lr1) at ($(lnorm1.south)+(0,-3.5mm)$);
\coordinate (lr2) at ($(ladd1.north)+(0,2.5mm)$);
\draw (lr1) -- ++(-12mm,0) |- (ladd1.west);
\draw (lr2) -- ++(-12mm,0) |- (ladd2.west);

\begin{scope}[on background layer]
    \node[panel, fit=(lnorm1) (ladd2), inner xsep=14mm, inner ysep=8mm] (lpanel) {};
\end{scope}

\coordinate (R) at ($(lpanel.east)+(40mm,-15mm)$);

\node[linear] (rlinG) at ($(R)+(18mm,-22mm)$) {Linear};
\node[linear] (rlinF) at ($(rlinG)+(-14mm,0)$) {Linear};
\node[linear] (rlinQKVf) at ($(rlinF)+(-18mm,0)$) {Linear};
\begin{scope}[on background layer]
    \node[linstack] at ($(rlinQKVf)+(-3.2mm,3.2mm)$) {};
    \node[linstack] at ($(rlinQKVf)+(-1.6mm,1.6mm)$) {};
\end{scope}

\node[conv, above=10mm of rlinQKVf] (rconv) {Causal\\Conv1D};
\begin{scope}[on background layer]
    \node[convstack] at ($(rconv)+(-3.2mm,3.2mm)$) {};
    \node[convstack] at ($(rconv)+(-1.6mm,1.6mm)$) {};
\end{scope}

\node[actsilu] (raq) at ($(rconv.north)+(-7mm,8mm)$) {};
\node[actsilu] (rak) at ($(rconv.north)+(0,8mm)$) {};
\node[actsilu] (rav) at ($(rconv.north)+(7mm,8mm)$) {};
\node[actpsi] (raf) at ($(rlinF.north)+(0,8mm)$) {$\psi$};
\node[actsilu] (rag) at ($(rlinG.north)+(0,8mm)$) {};

\draw[arr] ($(rconv.north)+(-7mm,3.2mm)$) -- (raq);   
\draw[arr] ($(rconv.north)+(0,1.6mm)$)    -- (rak);   
\draw[arr] ($(rconv.north)+(7mm,0)$)      -- (rav);   

\draw[arr] ($(rlinQKVf.north)+(-3.2mm,3.2mm)$) -- ($(rconv.south)+(-3.2mm,0)$);
\draw[arr] ($(rlinQKVf.north)+(-1.6mm,1.6mm)$) -- ($(rconv.south)+(-1.6mm,0)$);
\draw[arr] (rlinQKVf.north) -- (rconv.south);
\draw[arr] (rlinF.north) -- (raf.south);
\draw[arr] (rlinG.north) -- (rag.south);

\node[method, minimum width=30mm, minimum height=6mm] (rrec) at ($(rak)+(6mm,11mm)$) {M$^2$RNN recurrence};

\draw[arr] (raq.north) -- (raq.north |- rrec.south) node[lab,pos=0.55,right=0.4mm] {$q_t$};
\draw[arr] (rak.north) -- (rak.north |- rrec.south) node[lab,pos=0.55,right=0.4mm] {$k_t$};
\draw[arr] (rav.north) -- (rav.north |- rrec.south) node[lab,pos=0.55,right=0.4mm] {$v_t$};
\draw[arr] (raf.north) -- (raf.north |- rrec.south) node[lab,pos=0.55,right=0.4mm] {$f_t$};

\node[op, above=3mm of rrec] (rmul) {$\times$};
\node[norm, above=3mm of rmul] (rrms) {RMSNorm};
\node[linear, above=3mm of rrms] (rfin) {Linear};

\draw[arr] (rrec.north) -- (rmul.south);
\draw[arr] (rmul.north) -- (rrms.south);
\draw[arr] (rrms.north) -- (rfin.south);
\draw[arr] (rfin.north) -- ++(0,6mm);

\draw[arr] (rag.north) |- (rmul.east)
    node[lab, pos=0.05, right=0.4mm] {$g_t$};

\coordinate (rinBus) at ($(rlinF.south)+(0,-4mm)$);
\coordinate (rinBot) at ($(rinBus)+(0,-5mm)$);

\draw[arr] (rlinQKVf.south |- rinBus) -- (rlinQKVf.south);
\draw[arr] (rinBus)                   -- (rlinF.south);
\draw[arr] (rlinG.south |- rinBus)    -- (rlinG.south);

\draw (rlinQKVf.south |- rinBus) -- (rlinG.south |- rinBus);

\draw[arr] (rinBot) -- (rinBus);

\begin{scope}[on background layer]
    \node[rounded corners=2pt, draw=gray!70, line width=0.6pt,
          fit=(rlinQKVf) (rlinG) (rfin) (rmul) (rrec) (rinBus),
          inner xsep=5mm, inner ysep=3mm] (rpanel) {};
\end{scope}

\draw[->, dashed, green!55!black, line width=0.5pt]
    (lm2.east) to[out=20, in=180] ($(rpanel.west)+(0,2mm)$);

\end{tikzpicture}
    \caption{Visualization of the \Method~layer. This block replaces attention and is combined with MLP and RMSNorm modules as in the Transformer.}
    \label{fig:layer}
\end{figure*}
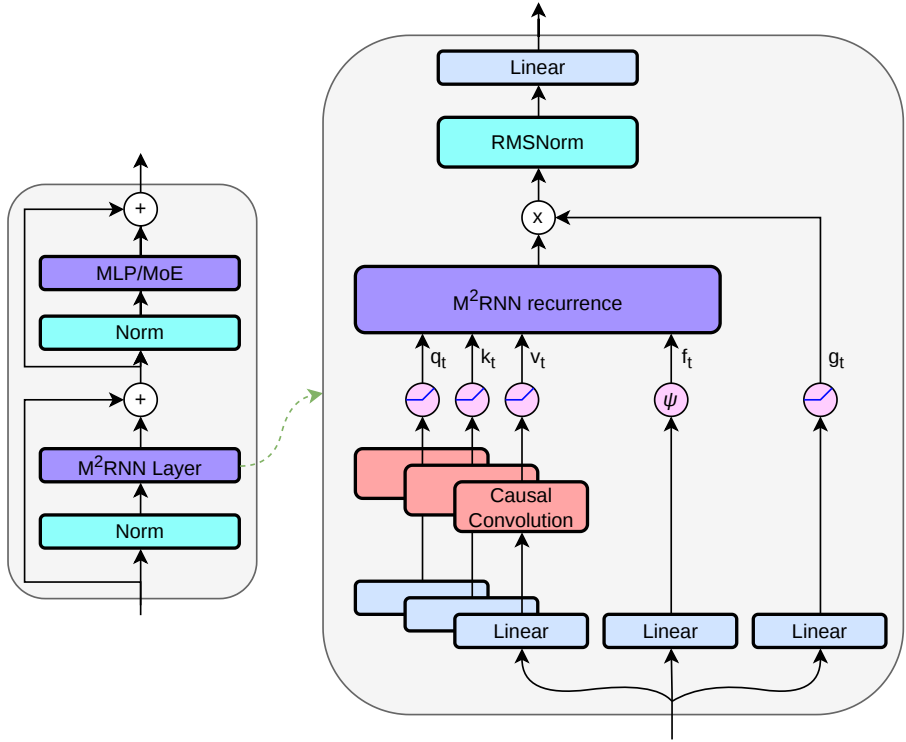

\paragraph{Transition Matrix Initialization.} We experimented with three initialization schemes for the transition matrix $\bb{W}$: normal initialization ($\bb{W} \sim \mathcal{N}(0, \sigma^2)$), orthogonal initialization ($\bb{W}\bb{W}^\top = \bb{I}$), and identity initialization ($\bb{W} = \bb{I}$) \cite{le2015simplewayinitializerecurrent}. Consistent with findings for vector-valued non-linear RNNs, orthogonal initialization outperforms normal initialization. Identity initialization performs on par with orthogonal initialization, so we adopt it for all models.

\paragraph{Gradient Clipping.} Training non-linear RNNs with backpropagation through time (BPTT) is susceptible to exploding gradients, as the gradient of the loss with respect to early hidden states involves repeated multiplication by the transition matrix $\bb{W}$ and the Jacobian of the $\tanh$ nonlinearity. Although the forget gate $f_t$ attenuates gradients flowing through the state, the transition matrix $\bb{W}$ can still cause gradient magnitudes to grow unboundedly. To stabilize training, we apply per-step gradient clipping to the gradient of the recurrent state $\bb{H}_t$ during BPTT.

\subsection{Improved State Tracking}
\begin{tcolorbox}[colback=myblue, colframe=blue, boxrule=0.5pt]
\begin{theorem}
    \label{theorem}
    The \method~recurrence can represent all tasks representable by non-linear vector-valued RNNs and hence can represent regular languages.
\end{theorem}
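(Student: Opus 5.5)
The plan is to show that the \method~recurrence contains the Elman recurrence $h_t = \tanh(\bb{W} h_{t-1} + x_t)$ as a special case by choosing parameters. Once we have this embedding, the claim about regular languages follows from the result recalled in the Expressivity paragraph: non-linear vector-valued RNNs (saturated, or under finite precision) can simulate arbitrary DFAs and therefore recognize all regular languages \cite{merrill2019sequential,merrill2024illusion}.

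\textbf{Step 1: switch off the gate.} Set $f_t = 0$ for all $t$. We can reach this with $\psi$ by letting $\alpha_n$ be large, or $\beta_n$ large, in the limit; if exact equality is needed, we take $\bb{W}_f x_t + \beta_n \to \infty$. Then $\bb{H}_t = \bb{Z}_t = \tanh(\bb{H}_{t-1}\bb{W} + k_t v_t^\top)$.

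\textbf{Step 2: make the state rank one.} Take $K=1$, or more generally fix $k_t = e_1$ for all $t$. Since $\bb{H}_0 = 0$, induction shows every row of $\bb{H}_t$ other than the first stays $\tanh(0)=0$. Indeed, row $i$ of $\bb{H}_{t-1}\bb{W}$ equals $(\text{row } i \text{ of } \bb{H}_{t-1})\bb{W}$, which vanishes, and $e_1 v_t^\top$ contributes only to row $1$. Writing $h_t^\top$ for the first row, we get
\begin{align}
h_t = \tanh(\bb{W}^\top h_{t-1} + v_t).
\end{align}
This is exactly the Elman recurrence with transition matrix $\bb{W}^\top$, and $\bb{W}$ is an arbitrary free parameter. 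The driving input $v_t$ must then be an arbitrary input-dependent vector. We get this by choosing $\bb{W}_v$ and $b_v$, making the convolution the identity tap, and handling the SiLU outer activation. Here is the main obstacle: SiLU is not surjective, since its range is bounded below by about $-0.28$. I would handle it by absorbing an affine shift into $\bb{W}$. Since $\bb{W}$ and the input embedding are arbitrary, we write the Elman input as $v_t = \mathrm{SiLU}(u_t)$ with $u_t$ large and positive, so $v_t \approx u_t$. The remaining bias can be compensated through $\bb{W}$ acting on a constant coordinate of the state, which we can reserve by enlarging $V$ by one. Alternatively, we simply define the class of simulated RNNs with inputs in the range of SiLU. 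For finite-alphabet DFA simulation, only finitely many input vectors are needed, so a suitable rescaling always works.

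\textbf{Step 3: readout.} With $q_t = e_1$, $w_r = 0$ and $g_t$ constant, we get $y_t = \bb{H}_t^\top e_1 = h_t$. The downstream RMSNorm and linear output layer then decode the DFA state, exactly as the output layer of the vector RNN would.

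Combining these steps, any computation of a vector-valued non-linear RNN is reproduced exactly (or to arbitrary precision in the saturated regime), and the cited DFA-simulation results transfer.
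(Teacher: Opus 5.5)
Your proposal is correct and takes the same route as the paper's proof. You set $f_t = 0$, $w_r = 0$ and $q_t = k_t = e_1$, so the first row of the state evolves as the Elman recurrence $h_t = \tanh(\bb{W}^\top h_{t-1} + v_t)$ and is read out exactly; your induction that the remaining rows stay zero and your explicit transpose are precision the paper leaves implicit.

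The extra layer-level care (reaching $f_t = 0$ only as a limit of $\psi$, working around the range of SiLU) goes beyond the paper, which treats $f_t$ and $v_t$ as free input-dependent quantities at the recurrence level. One caveat: your constant-coordinate bias trick cannot act at $t = 1$, because $\bb{H}_0 = 0$ makes $\bb{H}_0 \bb{W} = 0$. It is therefore best kept as an optional aside rather than part of the argument.
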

\end{tcolorbox}
We give the proof for Theorem \ref{theorem} in Appendix \ref{appendix:proof}.

\citet{siems2025deltaproduct} show that Gated DeltaProduct~$[-1,1]$\footnote{Gated DeltaProduct~$[-1,1]$ denotes Gated DeltaProduct with the ability to have negative eigenvalues.} parameterized as a product of $k - 1$ Householder matrices can, in principle, represent the $S_k$ task. In particular, this implies that the $S_3$ task should be representable using Gated DeltaProduct~$[-1,1]$ with product of two Householder matrices. To evaluate this claim empirically, we compare \method, GRU \cite{gru}, \gdn~$[-1,1]$\footnote{\gdn~$[-1,1]$ denotes \gdn~with the ability to have negative eigenvalues.} \cite{gdn}, and Gated DeltaProduct~$[-1,1]$ \cite{siems2025deltaproduct} parameterized as a product of two Householder matrices on the $S_3$ task. All models are trained on sequences of length $128$ and evaluated on context lengths up to $512$.

\begin{figure}[h!]
    \centering
    \includegraphics[width=0.5\textwidth]{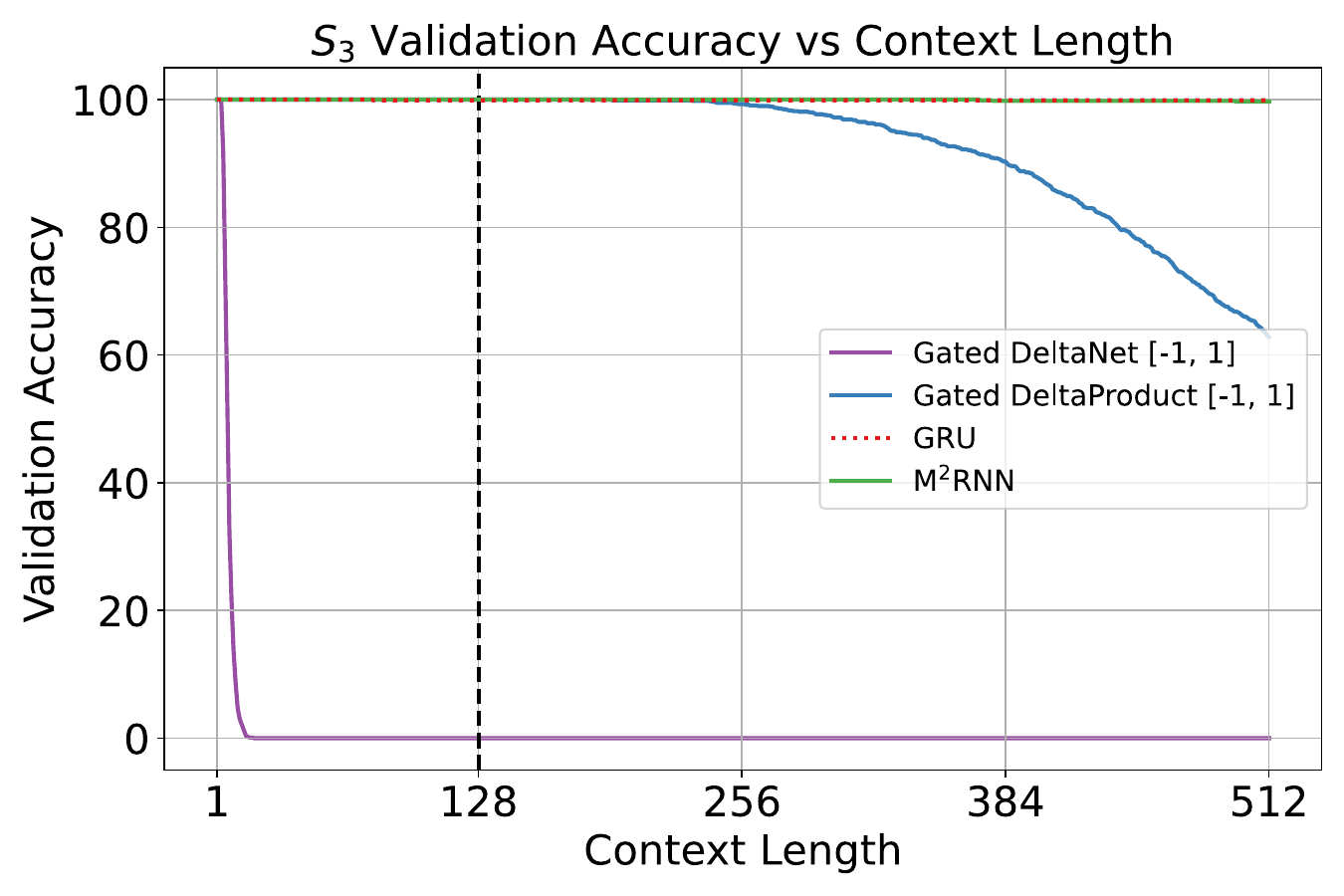}
    \caption{Accuracy on the permutation group $S_3$ state-tracking task for \method, \gdn~$[-1, 1]$, Gated DeltaProduct $[-1, 1]$ and GRU. The vertical line at $128$ denotes the training context length for all models. For the Gated DeltaProduct, we use product of 2 HouseHolder matrices since it can solve the $S_3$ task in theory \cite{grazzi2024unlocking}. However, as the evaluation context length is increased beyond the training context length, we observe accuracy degradation.}
    \label{fig:s3}
\end{figure}

As can be seen from Figure \ref{fig:s3}, \gdn~$[-1, 1]$ fails to learn the task altogether. Gated DeltaProduct~$[-1,1]$ achieves near-perfect accuracy up to context length $256$, however, it fails to generalize to longer sequence lengths not seen during training. These findings suggest that, despite its theoretical expressivity, the model exhibits limited length generalization even on the simple state-tracking $S_3$ task. However, both GRU \cite{gru} and \method~generalize perfectly to unseen context lengths achieving $\ge 99.5\%$ accuracy up to $512$ sequence length. This suggests that theoretical expressivity alone does not guarantee robust state tracking in practice, and that non-linear RNNs like \method~and GRU exhibit stronger length generalization on this task. Motivated by this, we investigate whether non-linear RNNs exhibit this length generalization advantage on language modeling when equipped with an expanded state size.

\subsection{Improved Language Modeling, In-Context Retrieval and Long-Context Performance}
The outer-product formulation for state size expansion in RNNs has become standard in many SSMs and linear RNNs \cite{gla,mamba1,mamba2,deltanet,gdn}. \citet{qin2024hgrn2} demonstrate that larger state sizes improve language modeling performance even when achieved through low-rank parameterizations. We also demonstrate this with empirical evidence in Section \ref{section:state-size}.

Additionally, \mamba~\cite{mamba2} increases the effective state size through a grouping strategy that shares parameters across heads using a multi-value attention formulation, allowing the model to scale the state dimension without increasing the overall parameter count. Building on this idea, we use an outer product with the multi-value attention formulation proposed by \citet{mamba2} to increase the achievable state size under a fixed parameter budget (see Table \ref{table:param-state} in Appendix \ref{appendix:state-size}).

The expanded matrix-valued state directly addresses the two core weaknesses of vector-valued non-linear RNNs identified in Section~\ref{sec:nonlinear-rnn-limitations}: poor language modeling and poor retrieval performance.

\paragraph{Language modeling.} The low language modeling performance of non-linear RNNs such as LSTMs \cite{lstm} and GRUs \cite{gru} relative to Transformers \cite{attention} and linear RNNs has traditionally been attributed to the non-linearity itself. We argue instead that the primary bottleneck is state capacity. By expanding the hidden state from a vector $h_t \in \mathbb{R}^d$ to a matrix $\bb{H}_t \in \mathbb{R}^{K \times V}$, \method~dramatically increases the information that can be stored at each recurrent step without a proportional increase in parameter count. We provide empirical evidence for this in Section~\ref{section:state-size}, showing that larger state sizes yield consistent language modeling improvements under a fixed parameter budget.

\paragraph{In-context retrieval and long-context performance.} The outer product write $k_t v_t^\top$ significantly expands the state size enabling efficient storage of key-value associations and structured in-context retrieval via the readout $\bb{H}_t^\top q_t$. This capacity is absent in vector-valued non-linear RNNs, which cannot store enough associations due to the limited state size. We demonstrate the resulting gains in Tables \ref{table:410m-real-retrieval}, \ref{table:7b-real-retrieval}. We further demonstrate the impact of the state size expansion in long-context tasks in Tables \ref{table:410m-longbench}, \ref{table:7b-longbench}.

\subsection{Improved Hardware Utilization}
\label{sec:hardware-utilization}
\subsubsection{Improved Tiling}
The outer product expansion enables \method\ to be efficiently parallelized across both the batch dimension $B$ and the number of heads $N$. In this formulation, each GPU streaming multiprocessor (SM) executes an independent recurrence over a matrix-valued state of size $K \times V$. Each step of the recurrence involves a GEMM with dimensions $\bb{M, N, K} = K, V, V$.

Importantly, the matrix-valued recurrence in \method\ requires no padding as long as $K$ and $V$ are multiples of 16 allowing utilizing the tensor cores. For example, when $K = 64$ and $V = 16$, the GEMM computation $\bb{H}_{t-1} \bb{W}$ can utilize the WGMMA instruction on NVIDIA Hopper GPUs \cite{nvidia_ptx}, ensuring high hardware efficiency.

Crucially, the GEMM shape in \method~is independent of the batch size $B$. This contrasts with vector-valued non-linear RNNs, where GEMM dimensions depend on $B$, often necessitating batch-size padding to achieve tensor core compatibility. As a result, the outer product state expansion mechanism achieves improved parallelism without wasting FLOPs, avoiding the padding overhead described in Section \ref{sec:training-inefficiency}.

\subsubsection{Efficient Integration of \method~Layers in Hybrid Architectures}
Due to the expensive nature of \method~layers, we explore using them more sparingly. Our results show that replacing only a single \mamba~or \gdn~layer with \method~in a hybrid architecture achieves the same accuracy improvements as the full hybrid \method~model, while keeping training throughput within 6\% of the Hybrid \gdn~at both 4k and 16k context lengths. This provides a scalable strategy for incorporating \method~layers into large language models.

\section{Distributed Training and Systems Optimizations}
In this section, we discuss the system considerations for \method~layers, including the forward and backward algorithms (Section \ref{sec:forward}, \ref{sec:backward}) and tensor parallelism (Section \ref{sec:tensor-parallel}).

\begin{algorithm}[h!]
    \caption{Forward Computation for \method~layer with multi-value attention formulation. The \blue{blue colour} indicates computation or tensors residing in on-chip memory.}
    \label{algorithm:forward}

    \begin{algorithmic}
        \INPUT: $\bb{H}_0 \in \mathbb{R}^{BNKV}, \bb{Q}, \bb{K} \in \mathbb{R}^{BTK}, \bb{V} \in \mathbb{R}^{BTNV}, \bb{W} \in \mathbb{R}^{NVV}, \bb{F} \in [0, 1]^{BTN}$
        \OUTPUT: $\bb{Y} \in \mathbb{R}^{BTNV}$

        \FOR[Each head on a separate SM]{$n = 0$ to $N - 1$ \textbf{in parallel}}
            \STATE Load $\bb{\blue{W}} \leftarrow \bb{W}[n, :, :]$ from HBM \COMMENT{Transition matrix to SRAM (reused across batch)}

            \FOR{$b = 0$ to $B - 1$ \textbf{in parallel}}
                \STATE Load $\bb{\blue{H}} \leftarrow \bb{H}_0[b, n, :, :]$ from HBM \COMMENT{Initial hidden state ($K \times V$) to SRAM}

                \FOR[Sequential recurrence over time]{t = $0$ to $T - 1$}
                    \STATE Load $\blue{q, k, v, f} \leftarrow \bb{Q}[b,t,:], \bb{K}[b,t,:], \bb{V}[b,t,n, :], \bb{F}[b,t,n]$ from HBM

                    \STATE Compute $\blue{\bb{Z} \leftarrow \tanh(\bb{H} \bb{W} + k v^\top)}$ \COMMENT{Nonlinear transition with outer product input}
                    \STATE Compute $\blue{\bb{H} \leftarrow f \bb{H} + (1 - f) \bb{Z}}$ \COMMENT{Gated update of hidden state}
                    \STATE Compute $\blue{y \leftarrow \bb{H}^\top q}$ \COMMENT{Query readout from hidden state}

                    \STATE Store $\bb{Y}[b,t,n,:] \leftarrow \blue{y}$ to HBM \COMMENT{Write output; no intermediate $\bb{H}$ cached}
                \ENDFOR
            \ENDFOR
        \ENDFOR
    \end{algorithmic}
\end{algorithm}

\begin{algorithm}[h!]
    \caption{Backward Computation for \method~layer with multi-value attention formulation. The \blue{blue colour} indicates computation or tensors residing in on-chip memory.}
    \label{algorithm:backward}

    \begin{algorithmic}
        \INPUT: $\bb{H}_0 \in \mathbb{R}^{BNKV}, \bb{Q} \in \mathbb{R}^{BTK}, \bb{K} \in \mathbb{R}^{BTK}, \bb{V} \in \mathbb{R}^{BTNV}, \bb{W} \in \mathbb{R}^{NVV}, \bb{F} \in [0, 1]^{BTN}, \bb{H}_\text{full} \in \mathbb{R}^{BTNKV}$
        \OUTPUT: $d\bb{Q} \in \mathbb{R}^{BTK}, d\bb{K} \in \mathbb{R}^{BTK}, d\bb{V} \in \mathbb{R}^{BTNV}, d\bb{W} \in \mathbb{R}^{NVV}, d\bb{F} \in [0, 1]^{BTN}$

        \FOR[Each head on a separate SM]{$n = 0$ to $N - 1$ \textbf{in parallel}}
            \STATE Load $\bb{\blue{W}} \leftarrow \bb{W}[n, :, :]$ from HBM \COMMENT{Transition matrix to SRAM}

            \FOR{$b = 0$ to $B - 1$ \textbf{in parallel}}
                \STATE Load $\bb{\blue{H}} \leftarrow \bb{H}_0[b, n, :, :]$ from HBM \COMMENT{Initial hidden state to SRAM}

                \FOR[Pass 1: forward recomputation, cache all $\bb{H}_t$]{$t = 0$ to $T - 1$}
                    \STATE Load $\blue{k, v, f} \leftarrow \bb{K}[b,t, :], \bb{V}[b,t,n, :], \bb{F}[b,t,n]$ from HBM

                    \STATE Store $\bb{H}_\text{full}[b, t, n, :, :] \leftarrow \blue{f \bb{H} + (1 - f) \tanh(\bb{H} \bb{W} + k v^\top)}$ to HBM \COMMENT{Cache hidden state for backward}
                \ENDFOR

                \STATE \blue{$d\bb{W} \leftarrow 0_{V \times V}$} \COMMENT{Initialize $d\bb{W}$ to 0}

                \FOR[Pass 2: reverse sweep to compute gradients]{$t = T - 1$ to $0$}
                    \IF{$t == 0$}
                        \STATE Load $\blue{\bb{H}_\text{prev}} \leftarrow \bb{H}_0[b, n, :, :]$ from HBM
                    \ELSE
                        \STATE Load $\blue{\bb{H}_\text{prev}} \leftarrow \bb{H}_\text{full}[b, t - 1, n, :, :]$ from HBM
                    \ENDIF

                    \STATE Load $\blue{q, k, v, f, dy} \leftarrow \bb{Q}[b,t, :], \bb{K}[b,t, :], \bb{V}[b,t,n, :], \bb{F}[b,t,n], \bb{dY}[b,t,n, :]$ from HBM
                    
                    \STATE Compute $\blue{\bb{Z} \leftarrow \tanh(\bb{H}_\text{prev} \bb{W} + k_t v_t^\top)}$ \COMMENT{Recompute pre-gate activation}
                    \STATE Compute $\blue{\bb{H} \leftarrow f \bb{H}_\text{prev} + (1 - f) \bb{Z}}$ \COMMENT{Recompute hidden state at $t$}

                    \STATE Compute $\blue{dq \leftarrow \bb{H} \cdot dy}$ \COMMENT{Gradient w.r.t.\ query}
                    \STATE Atomic Add $d\bb{Q}[b, t, :] \leftarrow \blue{dq}$ to HBM \COMMENT{Accumulate across shared heads}

                    \STATE Compute $\blue{r \leftarrow q \cdot dy^\top}$ \COMMENT{Outer product for temporary variable}
                    \STATE Compute $\blue{df \leftarrow 1_K^\top \left[r \odot (\bb{H}_\text{prev} - \bb{Z}) \right]1_V}$ \COMMENT{Gradient w.r.t.\ forget gate (scalar)}
                    \STATE Store $d\bb{F}[b, t, n] \leftarrow \blue{df}$ to HBM

                    \STATE Compute $\blue{d\bb{H} \leftarrow r \odot f}$ \COMMENT{Gradient through the gated residual}
                    \STATE Compute $\blue{d\bb{Z} \leftarrow r \odot (1 - f)}$ \COMMENT{Gradient through the nonlinear branch}
                    
                    \STATE Compute $\blue{d\bb{X} \leftarrow d\bb{Z} \odot (\bb{1}_{K \times V} - \bb{Z} \odot \bb{Z})}$ \COMMENT{Backprop through $\tanh$}
                    \STATE Compute $\blue{d\bb{H} \leftarrow d\bb{Z} \odot (d\bb{X} \cdot \bb{W}^\top)}$ \COMMENT{Gradient w.r.t.\ previous hidden state}
                    \STATE Compute $\blue{d\bb{W} \leftarrow d\bb{W} + \bb{H}_\text{prev}^\top d\bb{X}}$ \COMMENT{Accumulate gradient for transition matrix}

                    \STATE Compute $\blue{dk \leftarrow d\bb{X} \cdot v}$ \COMMENT{Gradient w.r.t.\ key}
                    \STATE Atomic Add $d\bb{K}[b, t, :] \leftarrow \blue{dk}$ to HBM \COMMENT{Accumulate across shared heads}

                    \STATE Compute $\blue{dv \leftarrow d\bb{X}^\top k}$ \COMMENT{Gradient w.r.t.\ value}
                    \STATE Store $d\bb{V}[b, t, n, :] \leftarrow \blue{dv}$ to HBM \COMMENT{Write to HBM}
                \ENDFOR
            \STATE Atomic Add $d\bb{W}[n, :, :] \leftarrow \blue{d\bb{W}}$ to HBM \COMMENT{Accumulate $d\bb{W}$ across batch}
            \ENDFOR
        \ENDFOR
    \end{algorithmic}
\end{algorithm}

\subsection{Forward Algorithm}
\label{sec:forward}
We implement the forward computation for the \method~layer using the Triton \cite{triton} DSL. The kernel parallelizes across the batch dimension $B$ and the number of heads $N$, assigning each $(b, n)$ pair to a separate GPU SM. 
We illustrate our algorithm in Algorithm \ref{algorithm:forward} for the case of multi-value formulation ($N_q = N_k = 1$ and $N_v, N_f, N_w = N$) which we use for all our models. We avoid caching intermediate activations during the backward pass for use in backward computation, as storing the $\bb{H}$ matrix is prohibitively expensive in both I/O and memory footprint: $\bb{H}_t \in \mathbb{R}^{BNKV}$ is much larger than the per-time-step keys and values ($k_t \in \mathbb{R}^{N_kK}, v_t \in \mathbb{R}^{N_vV}$).

\subsection{Backward Algorithm}
\label{sec:backward}
For the backward computation, we recompute the forward recurrence without the final query readout and cache $\bb{H}_t$ at every time step to HBM. The backward kernel then reads $\bb{H}_t$ at every time step from HBM to compute the gradients for the inputs. The backward kernel is implemented in the Triton DSL \cite{triton}, which does not expose direct programmability of shared memory. As a result, the kernel is heavily memory-bandwidth bound; an optimized CUTLASS implementation is left for future work. The algorithm is illustrated in Algorithm \ref{algorithm:backward}.

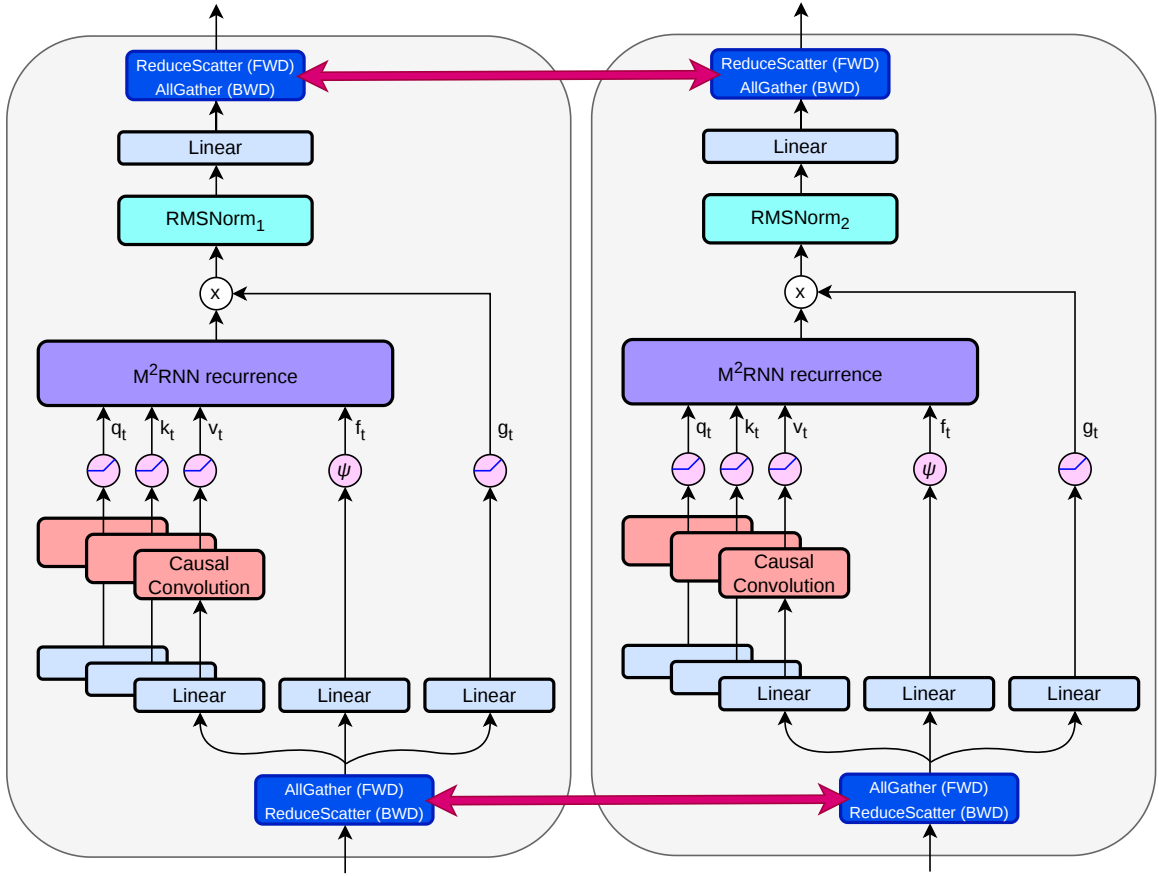
\begin{figure*}[h!]
    \centering
    \begin{tikzpicture}[
    font=\small,
    >={Stealth[length=1.6mm,width=1.4mm]},
    line width=0.4pt,
    block/.style={
        rectangle, rounded corners=1.5pt, draw=black,
        minimum width=14mm, minimum height=5mm, align=center,
        line width=0.5pt
    },
    norm/.style={block, fill=cyan!25},
    method/.style={block, fill=violet!35},
    linear/.style={block, fill=blue!15, minimum width=12mm, minimum height=5mm},
    comms/.style={block, fill=green!15, minimum width=12mm, minimum height=5mm},
    linstack/.style={block, fill=blue!15, minimum width=12mm, minimum height=5mm},
    conv/.style={block, fill=red!25, minimum width=18mm, minimum height=10mm},
    convstack/.style={block, fill=red!25, minimum width=18mm, minimum height=10mm},
    actsilu/.style={
        circle, draw, fill=violet!18, inner sep=0pt, minimum size=4mm,
        path picture={
            \draw[line width=0.3pt]
                ($(path picture bounding box.west)+(0.3mm,-0.5mm)$)
                -- ($(path picture bounding box.center)+(0,-0.5mm)$)
                .. controls +(0.6mm,0) and +(-0.6mm,-0.4mm) ..
                ($(path picture bounding box.east)+(-0.3mm,0.7mm)$);
        }
    },
    actpsi/.style={circle, draw, fill=violet!18, inner sep=0pt, minimum size=4mm, font=\scriptsize},
    op/.style={circle, draw, inner sep=0pt, minimum size=3.6mm, font=\scriptsize},
    panel/.style={draw=none, inner sep=0pt},
    arr/.style={->, line width=0.45pt},
    lab/.style={font=\scriptsize, inner sep=1pt}
]

\coordinate (L) at (0,0);

\node[linear] (llinG) at ($(L)+(18mm,-22mm)$) {Linear};
\node[linear] (llinF) at ($(llinG)+(-14mm,0)$) {Linear};
\node[linear] (llinQKVf) at ($(llinF)+(-18mm,0)$) {Linear};
\begin{scope}[on background layer]
    \node[linstack] at ($(llinQKVf)+(-3.2mm,3.2mm)$) {};
    \node[linstack] at ($(llinQKVf)+(-1.6mm,1.6mm)$) {};
\end{scope}

\node[conv, above=10mm of llinQKVf] (lconv) {Causal\\Conv1D};
\begin{scope}[on background layer]
    \node[convstack] at ($(lconv)+(-3.2mm,3.2mm)$) {};
    \node[convstack] at ($(lconv)+(-1.6mm,1.6mm)$) {};
\end{scope}

\node[actsilu] (laq) at ($(lconv.north)+(-7mm,8mm)$) {};
\node[actsilu] (lak) at ($(lconv.north)+(0,8mm)$) {};
\node[actsilu] (lav) at ($(lconv.north)+(7mm,8mm)$) {};
\node[actpsi] (laf) at ($(llinF.north)+(0,8mm)$) {$\psi$};
\node[actsilu] (lag) at ($(llinG.north)+(0,8mm)$) {};

\draw[arr] ($(lconv.north)+(-7mm,3.2mm)$) -- (laq);   
\draw[arr] ($(lconv.north)+(0,1.6mm)$)    -- (lak);   
\draw[arr] ($(lconv.north)+(7mm,0)$)      -- (lav);   

\draw[arr] ($(llinQKVf.north)+(-3.2mm,3.2mm)$) -- ($(lconv.south)+(-3.2mm,0)$);
\draw[arr] ($(llinQKVf.north)+(-1.6mm,1.6mm)$) -- ($(lconv.south)+(-1.6mm,0)$);
\draw[arr] (llinQKVf.north) -- (lconv.south);
\draw[arr] (llinF.north) -- (laf.south);
\draw[arr] (llinG.north) -- (lag.south);

\node[method, minimum width=30mm, minimum height=6mm] (lrec) at ($(lak)+(6mm,11mm)$) {M$^2$RNN recurrence};

\draw[arr] (laq.north) -- (laq.north |- lrec.south) node[lab,pos=0.55,right=0.4mm] {$q_t$};
\draw[arr] (lak.north) -- (lak.north |- lrec.south) node[lab,pos=0.55,right=0.4mm] {$k_t$};
\draw[arr] (lav.north) -- (lav.north |- lrec.south) node[lab,pos=0.55,right=0.4mm] {$v_t$};
\draw[arr] (laf.north) -- (laf.north |- lrec.south) node[lab,pos=0.55,right=0.4mm] {$f_t$};

\node[op, above=3mm of lrec] (lmul) {$\times$};
\node[norm, above=3mm of lmul] (lrms) {RMSNorm};
\node[linear, above=3mm of lrms] (lfin) {Linear};

\draw[arr] (lrec.north) -- (lmul.south);
\draw[arr] (lmul.north) -- (lrms.south);
\draw[arr] (lrms.north) -- (lfin.south);

\draw[arr] (lag.north) |- (lmul.east)
    node[lab, pos=0.05, right=0.4mm] {$g_t$};

\coordinate (linBus) at ($(llinF.south)+(0,-4mm)$);
\coordinate (linBot) at ($(linBus)+(0,-5mm)$);

\draw[arr] (llinQKVf.south |- linBus) -- (llinQKVf.south);
\draw[arr] (linBus)                   -- (llinF.south);
\draw[arr] (llinG.south |- linBus)    -- (llinG.south);

\draw (llinQKVf.south |- linBus) -- (llinG.south |- linBus);

\begin{scope}[on background layer]
    \node[rounded corners=2pt, draw=gray!70, line width=0.6pt,
          fit=(llinQKVf) (llinG) (lfin) (lmul) (lrec) (linBus),
          inner xsep=5mm, inner ysep=3mm] (lpanel) {};
\end{scope}

\coordinate (R) at ($(lpanel.east|-L)+(35mm,0)$);

\node[linear] (rlinG) at ($(R)+(18mm,-22mm)$) {Linear};
\node[linear] (rlinF) at ($(rlinG)+(-14mm,0)$) {Linear};
\node[linear] (rlinQKVf) at ($(rlinF)+(-18mm,0)$) {Linear};
\begin{scope}[on background layer]
    \node[linstack] at ($(rlinQKVf)+(-3.2mm,3.2mm)$) {};
    \node[linstack] at ($(rlinQKVf)+(-1.6mm,1.6mm)$) {};
\end{scope}

\node[conv, above=10mm of rlinQKVf] (rconv) {Causal\\Conv1D};
\begin{scope}[on background layer]
    \node[convstack] at ($(rconv)+(-3.2mm,3.2mm)$) {};
    \node[convstack] at ($(rconv)+(-1.6mm,1.6mm)$) {};
\end{scope}

\node[actsilu] (raq) at ($(rconv.north)+(-7mm,8mm)$) {};
\node[actsilu] (rak) at ($(rconv.north)+(0,8mm)$) {};
\node[actsilu] (rav) at ($(rconv.north)+(7mm,8mm)$) {};
\node[actpsi] (raf) at ($(rlinF.north)+(0,8mm)$) {$\psi$};
\node[actsilu] (rag) at ($(rlinG.north)+(0,8mm)$) {};

\draw[arr] ($(rconv.north)+(-7mm,3.2mm)$) -- (raq);   
\draw[arr] ($(rconv.north)+(0,1.6mm)$)    -- (rak);   
\draw[arr] ($(rconv.north)+(7mm,0)$)      -- (rav);   

\draw[arr] ($(rlinQKVf.north)+(-3.2mm,3.2mm)$) -- ($(rconv.south)+(-3.2mm,0)$);
\draw[arr] ($(rlinQKVf.north)+(-1.6mm,1.6mm)$) -- ($(rconv.south)+(-1.6mm,0)$);
\draw[arr] (rlinQKVf.north) -- (rconv.south);
\draw[arr] (rlinF.north) -- (raf.south);
\draw[arr] (rlinG.north) -- (rag.south);

\node[method, minimum width=30mm, minimum height=6mm] (rrec) at ($(rak)+(6mm,11mm)$) {M$^2$RNN recurrence};

\draw[arr] (raq.north) -- (raq.north |- rrec.south) node[lab,pos=0.55,right=0.4mm] {$q_t$};
\draw[arr] (rak.north) -- (rak.north |- rrec.south) node[lab,pos=0.55,right=0.4mm] {$k_t$};
\draw[arr] (rav.north) -- (rav.north |- rrec.south) node[lab,pos=0.55,right=0.4mm] {$v_t$};
\draw[arr] (raf.north) -- (raf.north |- rrec.south) node[lab,pos=0.55,right=0.4mm] {$f_t$};

\node[op, above=3mm of rrec] (rmul) {$\times$};
\node[norm, above=3mm of rmul] (rrms) {RMSNorm};
\node[linear, above=3mm of rrms] (rfin) {Linear};

\draw[arr] (rrec.north) -- (rmul.south);
\draw[arr] (rmul.north) -- (rrms.south);
\draw[arr] (rrms.north) -- (rfin.south);

\draw[arr] (rag.north) |- (rmul.east)
    node[lab, pos=0.05, right=0.4mm] {$g_t$};

\coordinate (rinBus) at ($(rlinF.south)+(0,-4mm)$);
\coordinate (rinBot) at ($(rinBus)+(0,-5mm)$);

\draw[arr] (rlinQKVf.south |- rinBus) -- (rlinQKVf.south);
\draw[arr] (rinBus)                   -- (rlinF.south);
\draw[arr] (rlinG.south |- rinBus)    -- (rlinG.south);

\draw (rlinQKVf.south |- rinBus) -- (rlinG.south |- rinBus);

\begin{scope}[on background layer]
    \node[rounded corners=2pt, draw=gray!70, line width=0.6pt,
          fit=(rlinQKVf) (rlinG) (rfin) (rmul) (rrec) (rinBus),
          inner xsep=5mm, inner ysep=3mm] (rpanel) {};
\end{scope}

\node[comms] (top) at ($(lfin)!0.5!(rfin)+(0,12mm)$) {ReduceScatter (FWD)\\AllGather (BWD)};
\draw[arr] (lfin.north) |- (top.west);
\draw[arr] (rfin.north) |- (top.east);
\draw[arr] (top.north) -- ++(0,6mm);

\node[comms] (bot) at ($(top|-linBus)+(0,-12mm)$) {AllGather (FWD)\\ReduceScatter (BWD)};
\draw[arr] ($(bot.south)+(0,-6mm)$) -- (bot.south);
\draw[arr] (bot.west) -| (linBus);
\draw[arr] (bot.east) -| (rinBus);

\end{tikzpicture}
    \caption{TP topology-aware \method~layer running with TP on 2 GPUs. Note that RMSNorm$_1$ and RMSNorm$_2$ have different weights for the RMSNorm module on both GPUs.}
    \label{fig:tp-aware}
\end{figure*}

\subsection{Tensor Parallelism}
\label{sec:tensor-parallel}
\citet{shoeybi2019megatron} introduced tensor parallelism (TP), a model parallelism technique that partitions attention and MLP layers across multiple accelerators. This approach has become standard for training large-scale models \cite{palm,megatron-turing,llama3,gpt3,workshop2022bloom}. The core challenge in applying TP for \method~is that the multi-value formulation shares a single query and key head across all value heads ($N_q = N_k = 1$, $N_v = N$), preventing a straightforward partition of heads across GPUs. We present two strategies for implementing TP in \method~layers: a grouped-value approach that avoids extra communication, and a multi-value approach that preserves the parameter count at the cost of additional communication during training. We represent the TP world size as $N_\textrm{TP}$.

\subsubsection{TP Topology-Aware \method~Layers}
\label{sec:tp-topology-aware}
A straightforward way to implement TP for \method~layers is to adopt a grouped-value formulation: the number of query and key heads is set equal to the TP world size ($N_q = N_k = N_\textrm{TP}$), while the number of value heads remains $N_v = N$. Each GPU is assigned one query head, one key head, and $\frac{N_v}{N_\textrm{TP}}$ value heads\footnote{We assume $N_v$ is divisible by $N_\textrm{TP}$.}, and applies RMSNorm~\cite{rmsnorm} with different weights on different GPUs in the TP group (refer to Figure \ref{fig:tp-aware}). Since this is equivalent to running a multi-value \method~recurrence on every GPU, this requires no modifications to the kernels.

\paragraph{Advantages.} Each GPU operates independently on its assigned heads, so no communication is introduced beyond the standard TP communication required for an attention layer.

\paragraph{Disadvantages.} The query and key projection sizes grow proportionally with $N_\textrm{TP}$, coupling the parameter count to the TP topology used during training\footnote{The RMSNorm weight size also grows with $N_\textrm{TP}$ but is significantly smaller in comparison.}. Consequently, a model trained with TP world size $N_\textrm{TP}$ cannot be straightforwardly deployed at a smaller world size without carrying the inflated query, key projections and the larger activation footprint that comes with them (specifically, the input to the output linear projection). This is analogous to the approach proposed in \mamba~\cite{mamba2} and the Desync-Residual strategy\footnote{Refer to \url{https://arxiv.org/abs/2501.06589v2} (v2) since later versions of the paper dropped Desync-Residual.} \cite{zhang2025ladderresidualparallelismawarearchitectureaccelerating}.

\subsubsection{TP Topology-Independent \method~Layers}
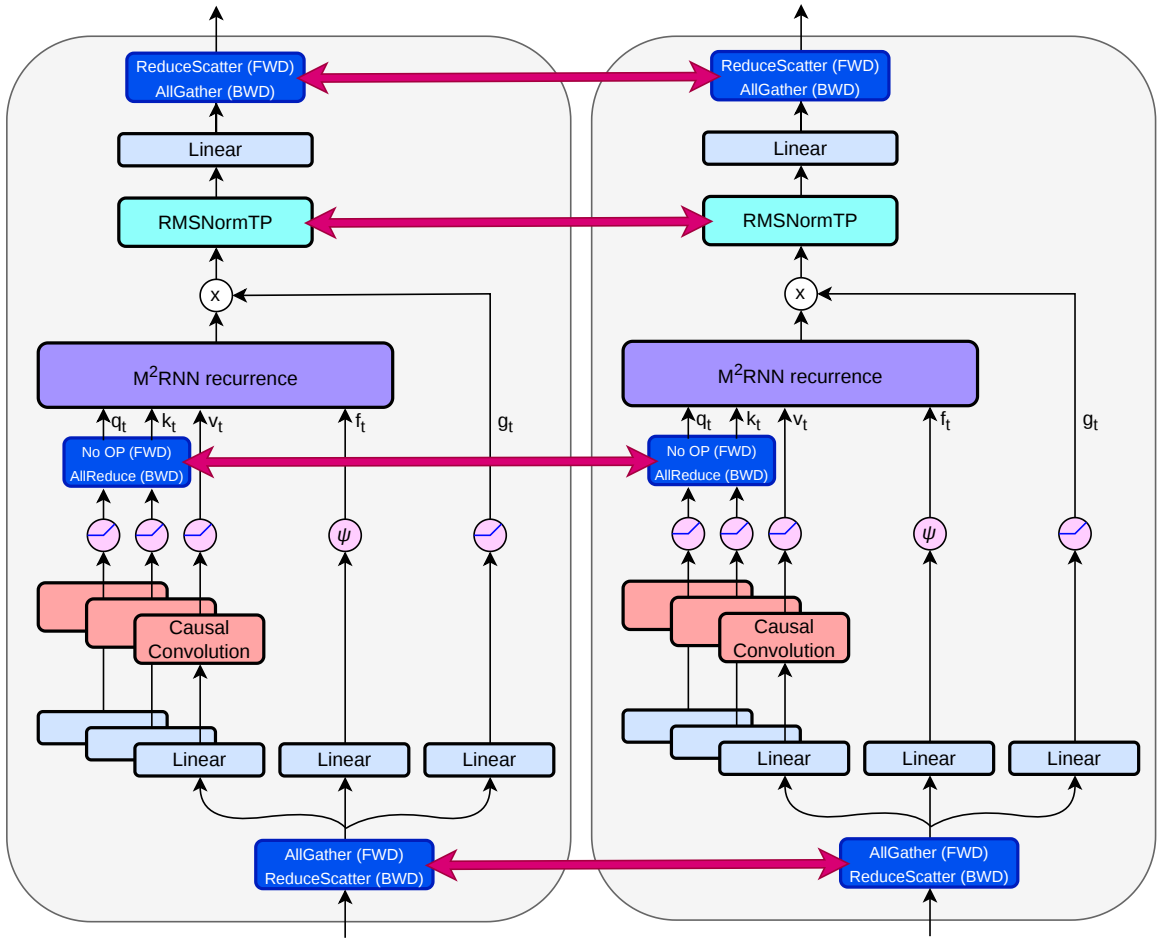
\begin{figure*}[h!]
    \centering
    \begin{tikzpicture}[
    font=\small,
    >={Stealth[length=1.6mm,width=1.4mm]},
    line width=0.4pt,
    block/.style={
        rectangle, rounded corners=1.5pt, draw=black,
        minimum width=14mm, minimum height=5mm, align=center,
        line width=0.5pt
    },
    norm/.style={block, fill=cyan!25},
    method/.style={block, fill=violet!35},
    linear/.style={block, fill=blue!15, minimum width=12mm, minimum height=5mm},
    comms/.style={block, fill=green!15, minimum width=12mm, minimum height=5mm},
    linstack/.style={block, fill=blue!15, minimum width=12mm, minimum height=5mm},
    conv/.style={block, fill=red!25, minimum width=18mm, minimum height=10mm},
    convstack/.style={block, fill=red!25, minimum width=18mm, minimum height=10mm},
    actsilu/.style={
        circle, draw, fill=violet!18, inner sep=0pt, minimum size=4mm,
        path picture={
            \draw[line width=0.3pt]
                ($(path picture bounding box.west)+(0.3mm,-0.5mm)$)
                -- ($(path picture bounding box.center)+(0,-0.5mm)$)
                .. controls +(0.6mm,0) and +(-0.6mm,-0.4mm) ..
                ($(path picture bounding box.east)+(-0.3mm,0.7mm)$);
        }
    },
    actpsi/.style={circle, draw, fill=violet!18, inner sep=0pt, minimum size=4mm, font=\scriptsize},
    op/.style={circle, draw, inner sep=0pt, minimum size=3.6mm, font=\scriptsize},
    panel/.style={draw=none, inner sep=0pt},
    arr/.style={->, line width=0.45pt},
    lab/.style={font=\scriptsize, inner sep=1pt}
]

\coordinate (L) at (0,0);

\node[linear] (llinG) at ($(L)+(18mm,-22mm)$) {Linear};
\node[linear] (llinF) at ($(llinG)+(-14mm,0)$) {Linear};
\node[linear] (llinQKVf) at ($(llinF)+(-18mm,0)$) {Linear};
\begin{scope}[on background layer]
    \node[linstack] at ($(llinQKVf)+(-3.2mm,3.2mm)$) {};
    \node[linstack] at ($(llinQKVf)+(-1.6mm,1.6mm)$) {};
\end{scope}

\node[conv, above=10mm of llinQKVf] (lconv) {Causal\\Conv1D};
\begin{scope}[on background layer]
    \node[convstack] at ($(lconv)+(-3.2mm,3.2mm)$) {};
    \node[convstack] at ($(lconv)+(-1.6mm,1.6mm)$) {};
\end{scope}

\node[actsilu] (laq) at ($(lconv.north)+(-7mm,11mm)$) {};
\node[actsilu] (lak) at ($(lconv.north)+(0,11mm)$) {};
\node[actsilu] (lav) at ($(lconv.north)+(7mm,11mm)$) {};
\node[actpsi] (laf) at ($(llinF.north)+(0,8mm)$) {$\psi$};
\node[actsilu] (lag) at ($(llinG.north)+(0,8mm)$) {};

\draw[arr] ($(lconv.north)+(-7mm,3.2mm)$) -- (laq);   
\draw[arr] ($(lconv.north)+(0,1.6mm)$)    -- (lak);   
\draw[arr] ($(lconv.north)+(7mm,0)$)      -- (lav);   

\draw[arr] ($(llinQKVf.north)+(-3.2mm,3.2mm)$) -- ($(lconv.south)+(-3.2mm,0)$);
\draw[arr] ($(llinQKVf.north)+(-1.6mm,1.6mm)$) -- ($(lconv.south)+(-1.6mm,0)$);
\draw[arr] (llinQKVf.north) -- (lconv.south);
\draw[arr] (llinF.north) -- (laf.south);
\draw[arr] (llinG.north) -- (lag.south);

\node[method, minimum width=30mm, minimum height=6mm] (lrec) at ($(lak)+(6mm,22mm)$) {M$^2$RNN recurrence};

\draw[arr] (laq.north) -- (laq.north |- lrec.south) node[lab,pos=0.65,right=0.4mm] {$q_t$};
\draw[arr] (lak.north) -- (lak.north |- lrec.south) node[lab,pos=0.65,right=0.4mm] {$k_t$};
\draw[arr] (lav.north) -- (lav.north |- lrec.south) node[lab,pos=0.65,right=0.4mm] {$v_t$};
\draw[arr] (laf.north) -- (laf.north |- lrec.south) node[lab,pos=0.55,right=0.4mm] {$f_t$};

\node[op, above=3mm of lrec] (lmul) {$\times$};
\node[norm, above=3mm of lmul] (lrms) {RMSNormTP};
\node[linear, above=3mm of lrms] (lfin) {Linear};

\draw[arr] (lrec.north) -- (lmul.south);
\draw[arr] (lmul.north) -- (lrms.south);
\draw[arr] (lrms.north) -- (lfin.south);

\draw[arr] (lag.north) |- (lmul.east)
    node[lab, pos=0.05, right=0.4mm] {$g_t$};

\coordinate (linBus) at ($(llinF.south)+(0,-4mm)$);
\coordinate (linBot) at ($(linBus)+(0,-5mm)$);

\draw[arr] (llinQKVf.south |- linBus) -- (llinQKVf.south);
\draw[arr] (linBus)                   -- (llinF.south);
\draw[arr] (llinG.south |- linBus)    -- (llinG.south);

\draw (llinQKVf.south |- linBus) -- (llinG.south |- linBus);

\begin{scope}[on background layer]
    \node[rounded corners=2pt, draw=gray!70, line width=0.6pt,
          fit=(llinQKVf) (llinG) (lfin) (lmul) (lrec) (linBus),
          inner xsep=5mm, inner ysep=3mm] (lpanel) {};
\end{scope}

\coordinate (R) at ($(lpanel.east|-L)+(35mm,0)$);

\node[linear] (rlinG) at ($(R)+(18mm,-22mm)$) {Linear};
\node[linear] (rlinF) at ($(rlinG)+(-14mm,0)$) {Linear};
\node[linear] (rlinQKVf) at ($(rlinF)+(-18mm,0)$) {Linear};
\begin{scope}[on background layer]
    \node[linstack] at ($(rlinQKVf)+(-3.2mm,3.2mm)$) {};
    \node[linstack] at ($(rlinQKVf)+(-1.6mm,1.6mm)$) {};
\end{scope}

\node[conv, above=10mm of rlinQKVf] (rconv) {Causal\\Conv1D};
\begin{scope}[on background layer]
    \node[convstack] at ($(rconv)+(-3.2mm,3.2mm)$) {};
    \node[convstack] at ($(rconv)+(-1.6mm,1.6mm)$) {};
\end{scope}

\node[actsilu] (raq) at ($(rconv.north)+(-7mm,11mm)$) {};
\node[actsilu] (rak) at ($(rconv.north)+(0,11mm)$) {};
\node[actsilu] (rav) at ($(rconv.north)+(7mm,11mm)$) {};
\node[actpsi] (raf) at ($(rlinF.north)+(0,8mm)$) {$\psi$};
\node[actsilu] (rag) at ($(rlinG.north)+(0,8mm)$) {};

\draw[arr] ($(rconv.north)+(-7mm,3.2mm)$) -- (raq);   
\draw[arr] ($(rconv.north)+(0,1.6mm)$)    -- (rak);   
\draw[arr] ($(rconv.north)+(7mm,0)$)      -- (rav);   

\draw[arr] ($(rlinQKVf.north)+(-3.2mm,3.2mm)$) -- ($(rconv.south)+(-3.2mm,0)$);
\draw[arr] ($(rlinQKVf.north)+(-1.6mm,1.6mm)$) -- ($(rconv.south)+(-1.6mm,0)$);
\draw[arr] (rlinQKVf.north) -- (rconv.south);
\draw[arr] (rlinF.north) -- (raf.south);
\draw[arr] (rlinG.north) -- (rag.south);

\node[method, minimum width=30mm, minimum height=6mm] (rrec) at ($(rak)+(6mm,22mm)$) {M$^2$RNN recurrence};

\draw[arr] (raq.north) -- (raq.north |- rrec.south) node[lab,pos=0.65,right=0.4mm] {$q_t$};
\draw[arr] (rak.north) -- (rak.north |- rrec.south) node[lab,pos=0.65,right=0.4mm] {$k_t$};
\draw[arr] (rav.north) -- (rav.north |- rrec.south) node[lab,pos=0.65,right=0.4mm] {$v_t$};
\draw[arr] (raf.north) -- (raf.north |- rrec.south) node[lab,pos=0.55,right=0.4mm] {$f_t$};

\node[op, above=3mm of rrec] (rmul) {$\times$};
\node[norm, above=3mm of rmul] (rrms) {RMSNormTP};
\node[linear, above=3mm of rrms] (rfin) {Linear};

\draw[arr] (rrec.north) -- (rmul.south);
\draw[arr] (rmul.north) -- (rrms.south);
\draw[arr] (rrms.north) -- (rfin.south);

\draw[arr] (rag.north) |- (rmul.east)
    node[lab, pos=0.05, right=0.4mm] {$g_t$};

\coordinate (rinBus) at ($(rlinF.south)+(0,-4mm)$);
\coordinate (rinBot) at ($(rinBus)+(0,-5mm)$);

\draw[arr] (rlinQKVf.south |- rinBus) -- (rlinQKVf.south);
\draw[arr] (rinBus)                   -- (rlinF.south);
\draw[arr] (rlinG.south |- rinBus)    -- (rlinG.south);

\draw (rlinQKVf.south |- rinBus) -- (rlinG.south |- rinBus);

\begin{scope}[on background layer]
    \node[rounded corners=2pt, draw=gray!70, line width=0.6pt,
          fit=(rlinQKVf) (rlinG) (rfin) (rmul) (rrec) (rinBus),
          inner xsep=5mm, inner ysep=3mm] (rpanel) {};
\end{scope}

\node[comms] (top) at ($(lfin)!0.5!(rfin)+(0,12mm)$) {ReduceScatter (FWD)\\AllGather (BWD)};
\draw[arr] (lfin.north) |- (top.west);
\draw[arr] (rfin.north) |- (top.east);
\draw[arr] (top.north) -- ++(0,6mm);

\node[comms] (bot) at ($(top|-linBus)+(0,-12mm)$) {AllGather (FWD)\\ReduceScatter (BWD)};
\draw[arr] ($(bot.south)+(0,-6mm)$) -- (bot.south);
\draw[arr] (bot.west) -| (linBus);
\draw[arr] (bot.east) -| (rinBus);

\draw[<->, dashed, green!55!black, line width=0.5pt] (lrms.east) -- (rrms.west)
    node[lab, midway, above, fill=white, inner sep=1pt] {AllReduce (FWD/BWD)};

\coordinate (lqkmid) at ($(lak.north)!0.3!(lak.north |- lrec.south)$);
\coordinate (rqkmid) at ($(rak.north)!0.3!(rak.north |- rrec.south)$);
\coordinate (lqkx)   at ($(laq)!0.5!(lak)$);
\coordinate (rqkx)   at ($(raq)!0.5!(rak)$);

\node[comms, minimum width=8mm, minimum height=4mm, font=\scriptsize, inner xsep=1pt]
    (lqkblock) at (lqkx |- lqkmid) {No-OP (FWD)\\AllReduce (BWD)};
\node[comms, minimum width=8mm, minimum height=4mm, font=\scriptsize, inner xsep=1pt]
    (rqkblock) at (rqkx |- rqkmid) {No-OP (FWD)\\AllReduce (BWD)};

\draw[<->, dashed, green!55!black, line width=0.5pt] (lqkblock.east) -- (rqkblock.west);

\end{tikzpicture}
    \caption{TP topology-independent \method~layer. Note that RMSNormTP have weights sharded along the model width dimension for the RMSNorm module on both GPUs requiring a synchronization in both the forward and backward computation.}
    \label{fig:tp-independent}
\end{figure*}
Alternatively, TP can be applied directly to the multi-value formulation ($N_q = N_k = 1$, $N_v = N$). Each GPU processes $\frac{N}{N_\textrm{TP}}$ value heads but shares the same query and key projections. This requires replacing RMSNorm \cite{rmsnorm} modules with RMSNormTP (refer to Appendix \ref{sec:rmsnormtp}) where the weight for RMSNorm is sharded and an extra communication is introduced in RMSNormTP for the forward and backward computation. The communication in the forward is required to synchronize the RMSNorm normalization term and the synchronization in the backward is required for computing the activation gradient. Note that this idea is applicable to \mamba~\cite{mamba2} and \gdn~\cite{gdn} layers as well. If the weights for the causal convolution and linear projections for queries and keys are initialized to the same values on all GPUs participating in TP, the sharding method illustrated in Figure \ref{fig:tp-independent} ensures that the weights remain synchronized during training.

\paragraph{Advantages.} This TP approach preserves the parameter count regardless of the number of GPUs and makes the \method~layers independent of the TP topology used during training.

\paragraph{Disadvantages.} This approach introduces additional AllReduce communications beyond standard TP. In the forward pass, one extra AllReduce is needed to synchronize the RMSNorm normalization term\footnote{The RMSNorm normalization term depends only on the batch size ($B$) and sequence length ($T$), so this communication is lightweight.}. The backward pass requires three extra AllReduces: one for the RMSNorm gradient and two to aggregate the query and key activation gradients, since each GPU computes partial gradients for the shared query and key heads from its assigned subset of value heads. Weight gradients for the query and key projections remain naturally synchronized across GPUs throughout training, provided the corresponding weights are initialized identically, so no additional communication is needed for them. One way to reduce backward communication is to make only the RMSNorm module TP topology-dependent (as in Section \ref{sec:tp-topology-aware}); since RMSNorm has very few parameters, this trade-off may be acceptable in practice.

\section{Experiments}
We train dense models with $410$M parameters (24 layers) and Mixture-of-Experts (MoE) \cite{shazeer2017outrageously} models with $7$B total ($1.1$B active) parameters (40 layers). All models are trained on 100B tokens sampled from the high quality subset of Nemotron-CC-v2 \cite{nemotron-cc-v2}.

\subsection{Models}
\subsubsection{Homogeneous Architectures}
We compare \method~primarily against \mamba~\cite{mamba2}, \gdn~\cite{gdn}, 
RNN and GRU \cite{gru}. We focus on \mamba~and \gdn~because they (and their variants) have demonstrated strong performance in state-of-the-art production models \cite{team2025kimi, nemotron-cc-v2, lieber2024jamba, ibm_granite_40_models}.

\subsubsection{2-way Hybrid Models}
We additionally run experiments in hybrid settings that combine recurrent layers with attention layers. Following common practice in production LLMs \cite{ibm_granite_40_models,lieber2024jamba}, we use 1 attention layer for every 7 recurrent layers (i.e., 1 out of every 8 layers is attention). We compare Hybrid \method~against Transformer++ \cite{attention}, Hybrid \mamba~and Hybrid \gdn.

\subsubsection{3-way Hybrid Models}
We also experiment with augmenting Hybrid \gdn~models with \method~layers, yielding 3-way hybrids denoted ``Hybrid GDN + \method-$n$'', where $n$ is the number of \method~layers. For $n = 1$, we replace the first recurrent layer with \method; for $n > 1$, we replace the layer immediately preceding each attention layer, producing ``Hybrid GDN + \method-3'' and ``Hybrid GDN + \method-5'' for the 410M and 7B models respectively.

\subsection{Model Training}
For all the models, we use RMSNorm \cite{rmsnorm} with pre-normalization \cite{prenorm}. Additionally, Transformer++ uses RoPE positional embeddings \cite{su2024roformer} while all linear and hybrid architectures use NoPE (no positional embeddings) \cite{kazemnejad2023impact}. For MLP and MoE layers \cite{shazeer2017outrageously}, we use the SwiGLU activation \cite{shazeer2020glu}. We keep the model width, intermediate MLP width, number of layers, and all other hyperparameters consistent across architectures, adjusting only the parameters in the sequence mixing block.

All models are trained with the AdamW optimizer \cite{loshchilov2019decoupledweightdecayregularization} using a cosine learning rate schedule. We linearly warm up the learning rate over the first 10B tokens to a peak value of $3 \times 10^{-4}$, followed by decaying to $10\%$ of the peak ($3 \times 10^{-5}$). We use gradient clipping of $1$ and weight decay of $0.1$. We don't apply weight decay to the normalization parameters. All models are trained with a batch size of $1024$ samples at a context length of $4096$ tokens. We ensure that all models see identical data in identical order to ensure fair comparison. The 410M dense models are trained on 32 H100s and the 7B MoE models on 64 H100s.

We use the lm-engine codebase \cite{mishra2024lmengine} written in PyTorch \cite{paszke2019pytorch} for training all our models. We also use FlashAttention-3 \cite{shah2024flashattention} and SonicMoE \cite{guo2025sonicmoe} with PyTorch compile \cite{pytorch2} for training throughput optimization. We also use fast kernels for causal convolutions, \mamba, \gdn, and \method~where applicable. The models use mixed precision \cite{micikevicius2017mixed} in BF16 \cite{kalamkar2019study}, with gradient AllReduce and accumulation in FP32 for numerical stability.

\subsection{Language Modeling}

We evaluate all trained models on a suite of downstream benchmarks using EleutherAI's LM Eval Harness \cite{eval-harness}: LAMBADA \cite{paperno2016lambada}, HellaSwag \cite{zellers2019hellaswagmachinereallyfinish}, PIQA \cite{bisk2019piqareasoningphysicalcommonsense}, ARC \cite{arc}, Winogrande \cite{sakaguchi2019winograndeadversarialwinogradschema}, BoolQ \cite{clark2019boolqexploringsurprisingdifficulty}, OpenBookQA \cite{obqa}, CoPA \cite{copa}, and SciQ \cite{sciq}. We additionally report language modeling perplexity on Wikitext \cite{wikitext} and LAMBADA \cite{paperno2016lambada}.

\begin{table}[htbp]
\centering
\resizebox{\textwidth}{!}{%
\begin{tabular}{lccccccccccccc}
\toprule
Model & Wiki PPL & LMB PPL & LAMBADA & HellaSwag & PIQA & ARC-E & ARC-C & WinoGrande & BoolQ & OBQA & COPA & SciQ & Avg Acc \\
\midrule
\mamba & 22.93 & 38.56 & 32.78 & 47.70 & 70.29 & 62.04 & 30.55 & 53.28 & 58.69 & 33.20 & 69.00 & 85.50 & \underline{54.30} \\
\gdn & \underline{22.70} & 34.17 & 34.21 & 47.16 & 71.00 & 61.24 & 28.07 & 51.07 & 58.84 & 32.80 & 70.00 & 85.50 & 53.99 \\
RNN & 33.74 & 317.38 & 13.00 & 44.12 & 69.70 & 62.54 & 30.63 & 49.72 & 45.75 & 33.40 & 63.00 & 77.10 & 48.90 \\
GRU & 25.80 & 63.83 & 26.10 & 48.70 & 71.49 & 64.90 & 30.97 & 49.64 & 57.68 & 33.80 & 71.00 & 82.90 & 53.72 \\
\method & 22.92 & \underline{33.63} & 33.98 & 47.76 & 71.60 & 62.29 & 30.29 & 53.43 & 50.21 & 34.80 & 72.00 & 86.00 & 54.24 \\
\midrule
Transformer++ & 23.32 & 42.87 & 33.36 & 45.02 & 68.88 & 59.76 & 29.01 & 50.43 & 54.56 & 32.00 & 72.00 & 84.00 & 52.90 \\
Hybrid \mamba & 21.59 & 35.84 & 35.32 & 48.11 & 70.89 & 62.54 & 29.35 & 52.72 & 51.41 & 35.40 & 65.00 & 86.80 & 53.75 \\
Hybrid \gdn & 21.89 & \underline{29.02} & 35.78 & 47.19 & 70.67 & 61.91 & 29.78 & 52.17 & 52.57 & 34.00 & 70.00 & 86.30 & 54.04 \\
Hybrid \method & \underline{21.53} & 37.31 & 35.67 & 48.74 & 70.29 & 63.38 & 30.29 & 51.46 & 50.09 & 33.80 & 71.00 & 86.80 & \underline{54.15} \\
\midrule
Hybrid \mamba & 21.59 & 35.84 & 35.32 & 48.11 & 70.89 & 62.54 & 29.35 & 52.72 & 51.41 & 35.40 & 65.00 & 86.80 & 53.75 \\
Hybrid \mamba~+ \method-1 & 21.48 & 36.98 & 34.47 & 48.23 & 70.46 & 64.14 & 31.57 & 54.46 & 49.54 & 33.60 & 70.00 & 85.80 & \underline{54.23} \\
Hybrid \mamba~+ \method-3 & \underline{21.39} & \underline{35.77} & 35.77 & 48.47 & 70.57 & 62.33 & 31.83 & 50.75 & 51.44 & 33.00 & 65.00 & 87.40 & 53.65 \\
\midrule
Hybrid \gdn & 21.89 & 29.02 & 35.78 & 47.19 & 70.67 & 61.91 & 29.78 & 52.17 & 52.57 & 34.00 & 70.00 & 86.30 & 54.04 \\
Hybrid \gdn~+ \method-1 & 21.39 & 33.78 & 34.83 & 48.11 & 70.73 & 61.24 & 29.95 & 51.70 & 48.41 & 33.20 & 72.00 & 87.30 & 53.75 \\
Hybrid \gdn~+ \method-3 & \textbf{21.26} & \textbf{28.93} & 37.18 & 48.83 & 69.64 & 63.85 & 29.44 & 53.35 & 59.14 & 33.80 & 71.00 & 86.10 & \textbf{55.23} \\
\bottomrule
\end{tabular}
}
\caption{Language Modeling on the 410M parameter dense model across commonsense reasoning benchmarks. All models are evaluated in the 0-shot setting.}
\label{table:410m}
\end{table}

\begin{table}[htbp]
\centering
\resizebox{\textwidth}{!}{%
\begin{tabular}{lccccccccccccc}
\toprule
Model & Wiki PPL & LMB PPL & LAMBADA & HellaSwag & PIQA & ARC-E & ARC-C & WinoGrande & BoolQ & OBQA & COPA & SciQ & Avg Acc \\
\midrule
\mamba & \underline{13.73} & \underline{11.03} & 50.09 & 67.43 & 76.82 & 74.33 & 43.52 & 61.64 & 64.01 & 41.00 & 80.00 & 93.40 & \underline{65.22} \\
\gdn & 13.89 & 11.10 & 48.92 & 66.94 & 77.26 & 73.82 & 40.53 & 60.14 & 65.41 & 39.60 & 78.00 & 92.20 & 64.28 \\
RNN & 17.65 & 43.68 & 30.25 & 64.49 & 76.61 & 72.64 & 41.04 & 55.64 & 58.35 & 40.20 & 79.00 & 87.60 & 60.58 \\
GRU & 14.80 & 15.33 & 43.94 & 67.36 & 76.82 & 75.67 & 43.52 & 58.96 & 63.79 & 41.00 & 82.00 & 91.30 & 64.44 \\
\method & 13.80 & 11.48 & 49.18 & 67.57 & 76.71 & 75.51 & 44.97 & 58.17 & 64.16 & 41.40 & 82.00 & 92.00 & 65.17 \\
\midrule
Transformer++ & 14.94 & 17.26 & 43.90 & 62.99 & 75.63 & 71.97 & 38.91 & 57.54 & 63.12 & 37.20 & 75.00 & 91.40 & 61.76 \\
Hybrid \mamba & 13.10 & 11.05 & 50.18 & 68.62 & 76.55 & 72.64 & 43.09 & 60.62 & 63.61 & 43.00 & 79.00 & 93.00 & 65.03 \\
Hybrid \gdn & 13.51 & 11.06 & 50.20 & 67.23 & 77.09 & 73.06 & 41.64 & 59.12 & 62.39 & 40.80 & 82.00 & 92.90 & 64.64 \\
Hybrid \method & \underline{13.00} & \underline{10.58} & 51.74 & 68.44 & 76.55 & 76.81 & 43.09 & 61.09 & 63.21 & 40.40 & 81.00 & 93.30 & \underline{65.56} \\
\midrule
Hybrid \mamba & 13.10 & 11.05 & 50.18 & 68.62 & 76.55 & 72.64 & 43.09 & 60.62 & 63.61 & 43.00 & 79.00 & 93.00 & 65.03 \\
Hybrid \mamba~+ \method-1 & \underline{13.01} & \underline{10.84} & 50.92 & 68.35 & 77.86 & 74.66 & 42.24 & 60.54 & 62.57 & 40.40 & 83.00 & 92.80 & 65.33 \\
Hybrid \mamba~+ \method-5 & 13.02 & 11.59 & 49.84 & 68.19 & 77.04 & 75.17 & 42.58 & 59.04 & 65.26 & 42.40 & 83.00 & 93.90 & \underline{65.64} \\
\midrule
Hybrid \gdn & 13.51 & 11.06 & 50.20 & 67.23 & 77.09 & 73.06 & 41.64 & 59.12 & 62.39 & 40.80 & 82.00 & 92.90 & 64.64 \\
Hybrid \gdn~+ \method-1 & 13.07 & 10.33 & 51.54 & 68.69 & 78.18 & 74.45 & 41.81 & 60.30 & 65.08 & 42.00 & 82.00 & 93.90 & \textbf{65.80} \\
Hybrid \gdn~+ \method-5 & \textbf{12.85} & \textbf{10.29} & 51.80 & 68.99 & 76.82 & 76.73 & 43.60 & 60.69 & 63.30 & 40.20 & 81.00 & 93.50 & 65.66 \\
\bottomrule
\end{tabular}
}
\caption{Language Modeling on the 7B (1B active) MoE model across commonsense reasoning benchmarks. All models are evaluated in the 0-shot setting.}
\label{table:7b}
\end{table}

\subsubsection{Homogeneous Architectures}
At the 410M scale, \gdn~achieves the best Wikitext \cite{wikitext} perplexity among all baselines, with \method~matching \mamba~\cite{mamba2} (within $0.01$ points). On LAMBADA \cite{paperno2016lambada}, \method~outperforms both \mamba~and \gdn~\cite{gdn} by $4.93$ and $0.54$ perplexity points respectively (Table \ref{table:410m}). We attribute \method's gap behind \gdn~on Wikitext to its $3\times$ smaller state size: while increasing the state size can close this gap, it also raises the per-step FLOP count of the recurrence, making it a less favorable tradeoff in practice.

At the 7B MoE scale, \mamba~achieves the best Wikitext perplexity, followed by \method, which outperforms \gdn~by $0.09$ points (Table \ref{table:7b}). On average downstream accuracy, \method~matches \mamba~to within $0.06$ points at both the 410M and 7B MoE scales while consistently outperforming \gdn~(Tables \ref{table:410m}, \ref{table:7b}). Vanilla RNN and GRU models significantly underperform \method~and other linear RNNs across all metrics.

\subsubsection{2-way Hybrid Models}
Combining \method~with attention \cite{attention} (denoted Hybrid \method) yields substantial gains on Wikitext \cite{wikitext}. Hybrid \method~outperforms Hybrid \mamba~\cite{mamba2} by $0.06$ and $0.1$ perplexity points at 410M and 7B MoE scale, and Hybrid \gdn~\cite{gdn} by $0.4$ and $0.5$ points at the 410M and 7B MoE scale. Average downstream accuracy also improves over all other hybrids at both scales (Tables \ref{table:410m}, \ref{table:7b}). Overall, 2-way hybrids improve upon both Transformer++ and homogeneous \method~models. We attribute these gains to \method's non-linear state transition: in the homogeneous setting, \method's advantage is constrained by its smaller state size relative to \mamba~and \gdn, however, in the hybrid setting, attention is responsible for in-context retrieval, allowing \method's non-linear recurrence to contribute expressivity that linear RNNs such as \mamba~and \gdn~cannot provide.

\subsubsection{3-way Hybrid Models}
We next explore 3-way hybrids that mix attention, \gdn~(or \mamba), and \method~layers. We find that replacing even a single linear RNN layer with \method~(denoted Hybrid \gdn~(or \mamba) + \method-1) suffices to match the accuracy of the full Hybrid \method~model while preserving comparable training throughput (Section \ref{section:throughput}). This result indicates that \method~layers carry information that is substantially different from what \gdn~layers capture.

Hybrid \gdn~+ \method-1 improves upon Hybrid \gdn, reducing Wikitext perplexity by $0.5$ and $0.44$ points at the 410M and 7B (1B active) scales, respectively. We also observe improvements on Hybrid \mamba~+ \method-1 over Hybrid \mamba~of $0.1$ perplexity points on both model scales. Increasing the number of \method~layers to 3 (5 for 7B MoE) yields additional gains of $0.13$ and $0.22$ perplexity points. Together, these results solidify the importance of non-linear recurrences in sequence mixing blocks.

\subsection{In-Context Retrieval}
\subsubsection{In-Context Retrieval on RULER}
\label{section:ruler}
We evaluate the in-context retrieval ability of \method~relative to other linear RNNs on the RULER benchmark \cite{hsieh2024ruler}, focusing on the Single Needle In A Haystack (S-NIAH-1, 2, 3) \cite{nelson2024needle}, Multi-Query (MQ), Multi-Key (MK), and Multi-Value (MV) variants:

\begin{enumerate}
    \item \textbf{S-NIAH}: A key-value pair is embedded in context and the model is tasked to retrieve the value given the key. S-NIAH-1 uses a synthetic passage with word keys and numeric values, probing long-term retention. S-NIAH-2 and S-NIAH-3 use real-world essays with numeric and UUID values, respectively, testing efficient long-context memory management \cite{gdn}.

    \item \textbf{MQ-NIAH}: In the MQ-NIAH benchmark, multiple key–value pairs are embedded within the passage, and the task requires retrieving all needles associated with distinct keys.

    \item \textbf{MK-NIAH}: The MK-NIAH benchmark inserts multiple key-value pairs in the passage. The task is to retrieve a single specified key while the other keys serve as distractors.

    \item \textbf{MV-NIAH}: Multiple values share the same key and the model must retrieve all values associated with that key.
\end{enumerate}

Figures \ref{fig:niah-410m-gdn} and \ref{fig:niah-410m-mamba} report accuracy at the 410M scale for \gdn~and \mamba-based models, respectively; Figures \ref{fig:niah-7b-gdn} and \ref{fig:niah-7b-mamba} report the same for the 7B (1B active) MoE model. All models are trained with a 4,096 context length and evaluated up to 16,384.

\paragraph{Hybrid \gdn~+ \method:} Replacing a few \gdn~layers with \method~layers in the hybrid setting significantly improves retrieval at unseen context lengths on S-NIAH-2, and MQ/MK/MV-NIAH at the 410M parameter scale (Figure \ref{fig:niah-410m-gdn}). At the 7B MoE scale, we observe that the 3-way consistently improves over the 2-way hybrid models except on S-NIAH-1 (Figure \ref{fig:niah-7b-gdn}). Notably, augmenting with \method~layers also improves MQ/MK/MV-NIAH accuracy at context lengths seen during training ($\le 4,096$) at both model scales.

On S-NIAH-1, we find that both Hybrid \method~and Hybrid \gdn~achieve 100\% accuracy across all context lengths, however, adding \method~layers leads to accuracy degradation, likely due to the extremely synthetic passages used in the S-NIAH-1 task.

\paragraph{Hybrid \mamba~+ \method:} We find that a single \method~layer with Hybrid \mamba~(Hybrid \mamba~+ \method-1) is insufficient to improve long-context retrieval at either model scale. However, replacing three \mamba~layers with \method~layers at 410M and 5 layers at 7B MoE yields consistent improvement (Figures \ref{fig:niah-410m-mamba}, \ref{fig:niah-7b-mamba}). We attribute this to the weaker state transition of \mamba~requiring more \method~layers to compensate.

\begin{figure}[h!]
    \centering
    \includegraphics[width=\textwidth]{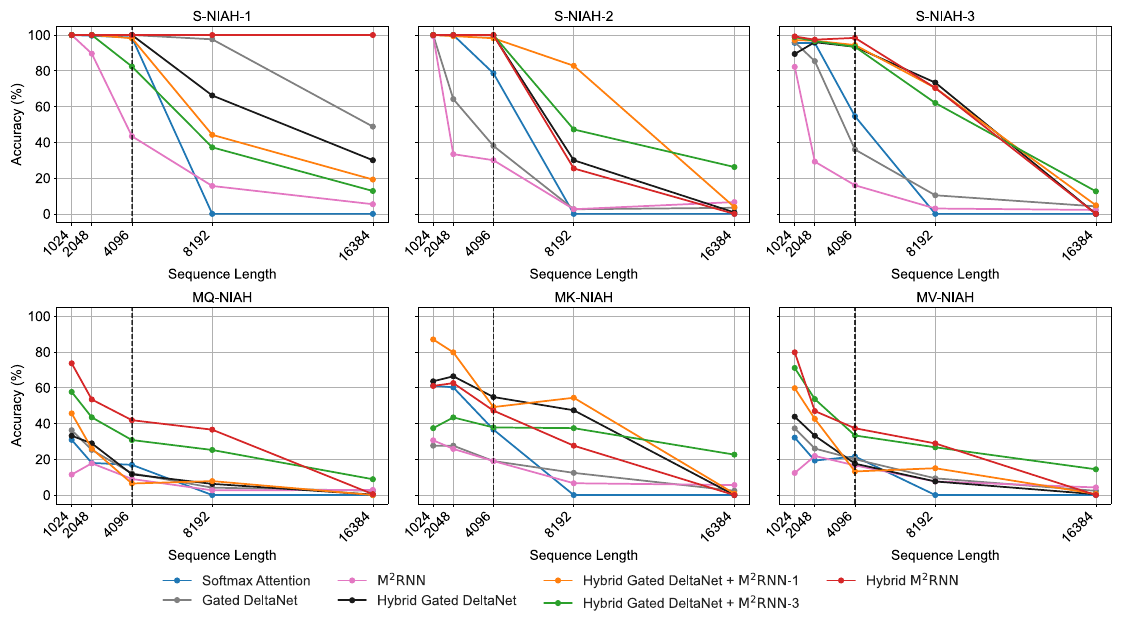}
    \caption{Zero-shot in-context retrieval performance on the RULER benchmark for \method~layers and \gdn~models on 410M parameter scale. The vertical lines at 4096 indicate the training context length.}
    \label{fig:niah-410m-gdn}

    \centering
    \includegraphics[width=\textwidth]{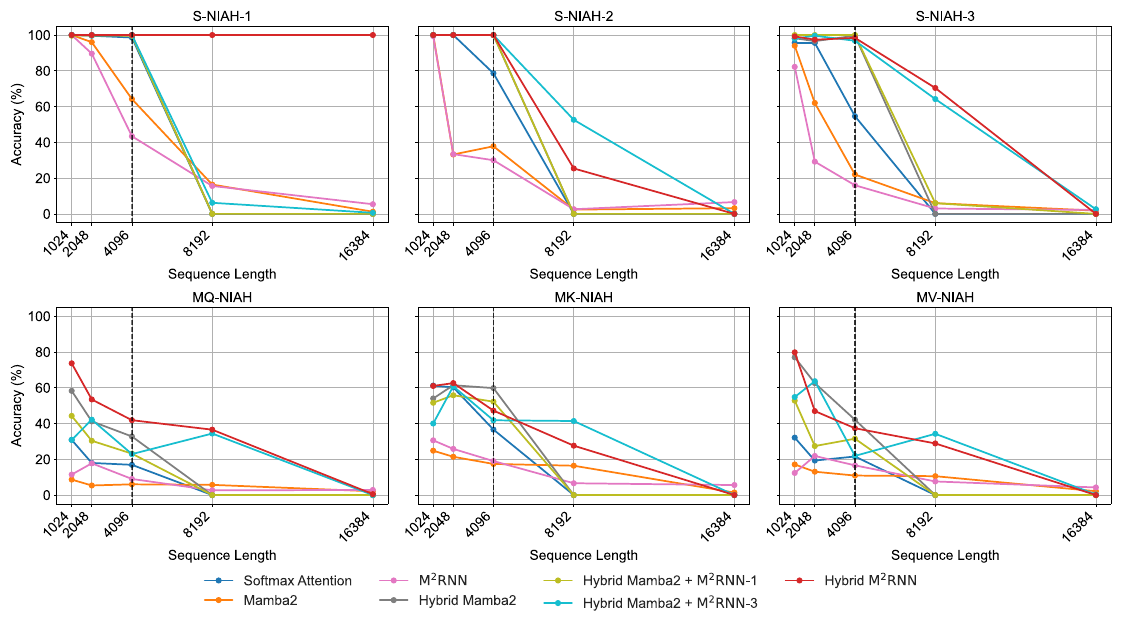}
    \caption{Zero-shot in-context retrieval performance on the RULER benchmark for \method~layers and \mamba~models on 410M parameter scale. The vertical lines at 4096 indicate the training context length.}
    \label{fig:niah-410m-mamba}
\end{figure}

\begin{figure}[h!]
    \centering
    \includegraphics[width=\textwidth]{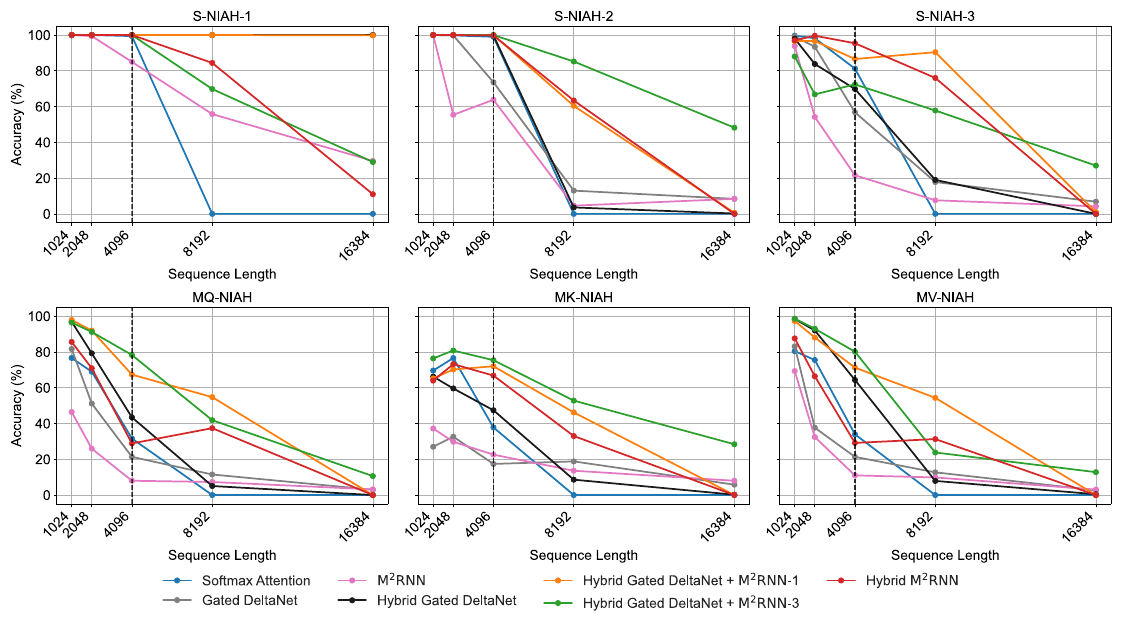}
    \caption{Zero-shot in-context retrieval performance on the RULER benchmark for \method~layers and \gdn~models for the 7B MoE model. The vertical lines at 4096 indicate the training context length.}
    \label{fig:niah-7b-gdn}

    \centering
    \includegraphics[width=\textwidth]{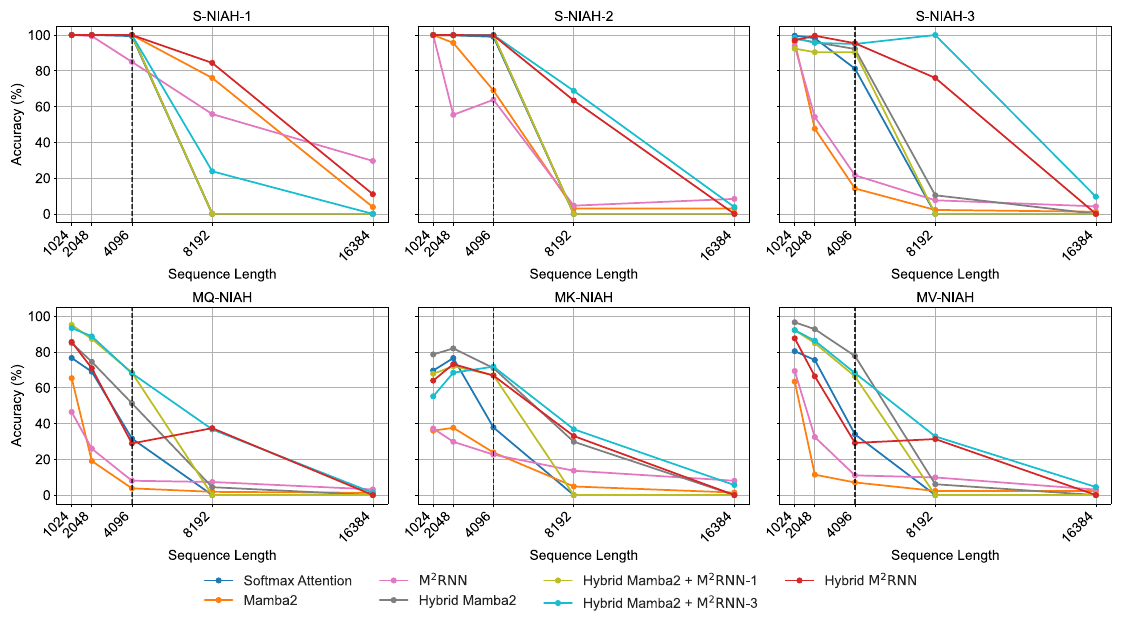}
    \caption{Zero-shot in-context retrieval performance on the RULER benchmark for \method~layers and \mamba~models for the 7B MoE model. The vertical lines at 4096 indicate the training context length.}
    \label{fig:niah-7b-mamba}
\end{figure}

\begin{table}[h!]
\centering
\scriptsize
\resizebox{\textwidth}{!}{%
\begin{tabular}{lccccccc}
\toprule
Model & SQuAD & NQ & DROP & TriviaQA & FDA & SWDE & Avg \\
\midrule
\mamba & 31.2 & 15.0 & 19.5 & 50.5 & 15.6 & 32.4 & 27.4 \\
\gdn & 34.8 & 15.4 & 24.9 & 51.7 & 16.3 & 41.8 & \underline{30.8} \\
RNN & 9.6 & 5.9 & 14.6 & 29.7 & 0.0 & 5.0 & 10.8 \\
GRU & 15.8 & 8.8 & 17.2 & 41.5 & 0.7 & 7.7 & 15.3 \\
\method & 34.0 & 13.0 & 22.9 & 52.3 & 10.6 & 28.5 & 26.9 \\
\midrule
Transformer++ & 14.0 & 15.5 & 28.2 & 52.4 & 46.0 & 51.7 & 34.6 \\
Hybrid \mamba & 37.6 & 21.4 & 26.4 & 51.7 & 65.3 & 61.7 & 44.0 \\
Hybrid \gdn & 37.6 & 22.2 & 26.0 & 53.4 & 61.5 & 58.0 & 43.1 \\
Hybrid \method & 41.3 & 22.7 & 26.8 & 55.5 & 74.5 & 60.7 & \textbf{46.9} \\
\midrule
Hybrid \mamba & 37.6 & 21.4 & 26.4 & 51.7 & 65.3 & 61.7 & 44.0 \\
Hybrid \mamba~+ \method-1 & 38.6 & 23.8 & 26.3 & 55.7 & 61.3 & 60.2 & 44.3 \\
Hybrid \mamba~+ \method-3 & 40.4 & 23.5 & 26.4 & 56.7 & 65.2 & 61.7 & \underline{45.6} \\
\midrule
Hybrid \gdn & 37.6 & 22.2 & 26.0 & 53.4 & 61.5 & 58.0 & 43.1 \\
Hybrid \gdn~+ \method-1 & 39.8 & 23.5 & 25.6 & 54.6 & 67.2 & 62.5 & \underline{45.5} \\
Hybrid \gdn~+ \method-3 & 39.3 & 21.4 & 27.2 & 54.9 & 65.1 & 56.5 & 44.1 \\
\bottomrule
\end{tabular}
}
\caption{In-context retrieval performance on real-world data at the 410M parameter scale.}
\label{table:410m-real-retrieval}
\end{table}

\begin{table}[h!]
\centering
\scriptsize
\resizebox{\textwidth}{!}{%
\begin{tabular}{lccccccc}
\toprule
Model & SQuAD & NQ & DROP & TriviaQA & FDA & SWDE & Avg \\
\midrule
\mamba & 41.7 & 26.5 & 30.0 & 65.0 & 33.0 & 54.4 & 41.8 \\
\gdn & 40.3 & 24.2 & 29.9 & 64.3 & 32.4 & 61.8 & \underline{42.2} \\
RNN & 20.0 & 12.2 & 20.4 & 53.1 & 0.8 & 9.8 & 19.4 \\
GRU & 27.0 & 16.7 & 24.6 & 59.3 & 5.2 & 18.4 & 25.2 \\
\method & 37.9 & 21.1 & 28.2 & 62.5 & 27.6 & 49.9 & 37.9 \\
\midrule
Transformer++ & 22.1 & 24.0 & 27.6 & 62.2 & 67.0 & 70.1 & 45.5 \\
Hybrid \mamba & 48.2 & 31.4 & 32.7 & 67.9 & 73.1 & 77.1 & 55.1 \\
Hybrid \gdn & 46.4 & 31.2 & 28.3 & 63.6 & 67.1 & 71.9 & 51.4 \\
Hybrid \method & 48.4 & 31.4 & 31.4 & 64.6 & 78.0 & 79.8 & \textbf{55.6} \\
\midrule
Hybrid \mamba & 48.2 & 31.4 & 32.7 & 67.9 & 73.1 & 77.1 & 55.1 \\
Hybrid \mamba~+ \method-1 & 46.6 & 30.6 & 33.6 & 67.1 & 78.4 & 75.0 & \underline{55.2} \\
Hybrid \mamba~+ \method-5 & 46.8 & 31.5 & 28.7 & 66.0 & 78.5 & 79.7 & \underline{55.2} \\
\midrule
Hybrid \gdn & 46.4 & 31.2 & 28.3 & 63.6 & 67.1 & 71.9 & 51.4 \\
Hybrid \gdn~+ \method-1 & 46.4 & 29.6 & 32.1 & 66.4 & 73.1 & 77.4 & 54.2 \\
Hybrid \gdn~+ \method-5 & 47.1 & 32.0 & 31.2 & 65.5 & 77.2 & 77.9 & \underline{55.1} \\
\bottomrule
\end{tabular}
}
\caption{In-context retrieval performance on real-world data at the 7B (1B active) MoE scale.}
\label{table:7b-real-retrieval}
\end{table}

\subsubsection{In-Context Retrieval on Real-World Data}
\label{section:in-context-retrieval-real}
\citet{deltanet} show that despite strong synthetic-task performance, DeltaNet still underperforms \mamba~\cite{mamba2} on real-world retrieval. To assess whether this limitation persists for \method, we evaluate in-context retrieval on real-world data for both model scales (Tables \ref{table:410m-real-retrieval}, \ref{table:7b-real-retrieval}).

As expected, Transformer++ \cite{attention} substantially outperforms all recurrent models due to its linearly growing key-value cache, which enables direct retrieval from any past position. Among purely recurrent architectures, \gdn~\cite{gdn} outperforms \mamba~\cite{mamba2} and \method~which we attribute to the $3 \times$ larger state size of \gdn. In the hybrid setting, all models significantly improve upon the Transformer++ baseline, with Hybrid \method~achieving the largest gains: $12.3$ and $10.1$ point gains across the 410M and 7B MoE model scales, outperforming both Hybrid \mamba~and Hybrid \gdn. Further, we find that adding \method~layers to Hybrid \gdn~(Hybrid \gdn~+ \method) further improves upon Hybrid \gdn. We observe similar improvements for Hybrid \mamba~when combined with \method~(Tables \ref{table:410m-real-retrieval}, \ref{table:7b-real-retrieval}).

\subsection{Long-Context Performance}
\begin{table}[htbp]
\centering
\resizebox{\textwidth}{!}{%
\begin{tabular}{lccccccccc}
\toprule
Model & GovReport & QMSum & MultiNews & TREC & TriviaQA & SAMSum & LCC & RepoBench-P & Avg \\
\midrule
\mamba & 6.6 & 14.3 & 9.9 & 26.5 & 23.2 & 11.5 & 13.7 & 8.9 & 14.3 \\
\gdn & 7.6 & 18.2 & 12.0 & 19.0 & 34.1 & 21.3 & 14.9 & 11.8 & \underline{17.3} \\
RNN & 9.3 & 15.2 & 9.2 & 0.5 & 12.2 & 5.0 & 17.9 & 14.9 & 10.5 \\
GRU & 7.0 & 16.5 & 10.8 & 1.0 & 21.4 & 8.5 & 12.5 & 9.0 & 10.8 \\
\method & 6.3 & 16.4 & 10.8 & 22.0 & 24.3 & 13.2 & 10.7 & 10.1 & 14.2 \\
\midrule
Transformer++ & 5.6 & 5.7 & 6.2 & 11.0 & 9.7 & 4.6 & 12.0 & 9.9 & 8.1 \\
Hybrid \mamba & 10.0 & 4.8 & 10.2 & 16.0 & 9.4 & 9.2 & 10.5 & 12.0 & 10.3 \\
Hybrid \gdn & 8.2 & 16.8 & 16.1 & 19.5 & 28.7 & 15.9 & 8.7 & 9.2 & \underline{15.4} \\
Hybrid \method & 10.0 & 14.0 & 7.2 & 17.5 & 35.5 & 14.9 & 13.4 & 10.3 & \underline{15.4} \\
\midrule
Hybrid \mamba & 10.0 & 4.8 & 10.2 & 16.0 & 9.4 & 9.2 & 10.5 & 12.0 & 10.3 \\
Hybrid \mamba~+ \method-1 & 9.8 & 9.0 & 7.5 & 22.5 & 13.3 & 12.6 & 13.2 & 9.8 & 12.2 \\
Hybrid \mamba~+ \method-3 & 10.5 & 15.8 & 13.2 & 29.0 & 31.7 & 12.1 & 10.7 & 13.3 & \underline{17.1} \\
\midrule
Hybrid \gdn & 8.2 & 16.8 & 16.1 & 19.5 & 28.7 & 15.9 & 8.7 & 9.2 & 15.4 \\
Hybrid \gdn~+ \method-1 & 9.5 & 14.8 & 17.6 & 47.0 & 32.1 & 19.4 & 7.3 & 8.5 & 19.5 \\
Hybrid \gdn~+ \method-3 & 16.0 & 18.1 & 14.0 & 43.0 & 41.5 & 24.7 & 12.7 & 13.9 & \textbf{23.0} \\
\bottomrule
\end{tabular}
}
\caption{Performance comparison on the 410M model across long-context summarization, coding and few-shot learning benchmarks.}
\label{table:410m-longbench}
\end{table}

We evaluate long-context performance on LongBench \cite{bai2024longbench} (Table \ref{table:410m-longbench}, \ref{table:7b-longbench}). In the 2-way hybrid setting, we find that Hybrid \method~matches Hybrid \gdn~at 410M parameter scale and outperforms Hybrid \gdn~by $1.5$ points on the 7B MoE scale. We find that combining \method~layer with Hybrid \gdn~results in up to 8 points of average accuracy improvement across both model scales across long-context summarization, coding, and few-shot learning tasks. Similar improvements are observable with Hybrid \mamba~+ \method-3 (5 for 7B MoE) models on both model scales.
\begin{table}[htbp]
\centering
\resizebox{\textwidth}{!}{%
\begin{tabular}{lccccccccc}
\toprule
Model & GovReport & QMSum & MultiNews & TREC & TriviaQA & SAMSum & LCC & RepoBench-P & Avg \\
\midrule
\mamba & 5.1 & 16.0 & 13.4 & 42.0 & 59.7 & 22.0 & 13.4 & 8.7 & 22.5 \\
\gdn & 4.8 & 20.3 & 12.2 & 47.0 & 61.6 & 36.2 & 9.7 & 7.9 & \underline{25.0} \\
RNN & 9.3 & 16.7 & 10.5 & 4.0 & 27.5 & 13.0 & 17.1 & 15.8 & 14.2 \\
GRU & 5.7 & 18.0 & 10.8 & 9.0 & 52.4 & 24.6 & 20.9 & 13.9 & 19.4 \\
\method & 8.1 & 19.1 & 11.8 & 26.5 & 51.3 & 30.7 & 20.4 & 19.3 & 23.4 \\
\midrule
Transformer++ & 6.1 & 6.1 & 17.0 & 16.0 & 11.4 & 10.0 & 13.1 & 10.2 & 11.2 \\
Hybrid \mamba & 9.1 & 7.7 & 20.3 & 62.5 & 23.3 & 12.1 & 5.6 & 9.8 & 18.8 \\
Hybrid \gdn & 10.4 & 15.8 & 16.7 & 57.5 & 35.1 & 23.6 & 9.7 & 16.3 & 23.1 \\
Hybrid \method & 10.9 & 13.9 & 21.1 & 52.0 & 44.7 & 18.5 & 19.8 & 15.7 & \underline{24.6} \\
\midrule
Hybrid \mamba & 9.1 & 7.7 & 20.3 & 62.5 & 23.3 & 12.1 & 5.6 & 9.8 & 18.8 \\
Hybrid \mamba~+ \method-1 & 8.6 & 3.9 & 21.2 & 25.5 & 15.0 & 11.5 & 7.5 & 11.6 & 13.1 \\
Hybrid \mamba~+ \method-5 & 18.6 & 10.3 & 7.1 & 60.5 & 45.7 & 21.4 & 9.7 & 8.0 & \underline{22.7} \\
\midrule
Hybrid \gdn & 10.4 & 15.8 & 16.7 & 57.5 & 35.1 & 23.6 & 9.7 & 16.3 & 23.1 \\
Hybrid \gdn~+ \method-1 & 17.2 & 16.5 & 6.0 & 69.5 & 64.9 & 34.1 & 20.1 & 15.1 & 30.4 \\
Hybrid \gdn~+ \method-5 & 12.2 & 16.0 & 16.1 & 59.0 & 73.5 & 36.0 & 18.0 & 18.9 & \textbf{31.2} \\
\bottomrule
\end{tabular}
}
\caption{Performance comparison on the 7B (1B active) MoE model across long-context summarization, coding and few-shot learning benchmarks.}
\label{table:7b-longbench}
\end{table}

\subsection{Ablations: Effect of State Size and Forget Gate}
\label{section:state-size}

We ablate the contributions of state size and the forget gate on the recurrent model performance (Table \ref{table:state-size}). First, we compare a traditional vector-valued RNN (406M parameters) against \method~(410M parameters). The matrix-valued \method~model outperforms the vector-valued RNN by more than 10 perplexity points on WikiText \cite{wikitext} and 280 points on LAMBADA \cite{paperno2016lambada}, underscoring the critical role of state size.

\begin{table}[htbp]
\centering
\setlength{\tabcolsep}{4pt}
\scriptsize
\begin{tabular}{lccr}
\toprule
Model & Wiki PPL & Lambada PPL & state size \\
\midrule
RNN-406M & 33.74 & 317.38 & 1,360 \\
GRU-474M & 25.80 & 63.83 & 1,360 \\
\method-410M & \textbf{22.92} & \textbf{33.63} & 86,016 \\
\bottomrule
\end{tabular}
\caption{Perplexity comparison for RNN, GRU and \method~models with different state sizes.}
\label{table:state-size}
\end{table}

To disentangle the effect of gating from that of state size, we additionally train GRU \cite{gru} models with the same state dimensions as the vector-valued RNN. Despite having $16\%$ more parameters (474M) due to the additional reset gate, GRUs still underperform \method~by 3 perplexity points on WikiText and 30 points on LAMBADA. These results confirm that expanding the state size is the primary driver of performance improvements in recurrent architectures, with gating mechanisms potentially providing further gains.

\subsection{Training Throughput}
\label{section:throughput}
We measure training throughput for \method, \gdn~\cite{gdn}, \mamba~\cite{mamba2}, and Transformer++ on $8\times$ NVIDIA HGX H100 GPUs connected via NVLink, using our 7B (1B active) MoE configuration. As shown in Figure \ref{fig:throughput}, \mamba~consistently achieves the highest throughput across context lengths, while Transformer++ degrades significantly at longer contexts due to its quadratic complexity.

Although \method~has linear time complexity, its higher constant factors make it more expensive than the linear baselines in the homogeneous setting. However, replacing a single \gdn~layer with \method~layer to the Hybrid \gdn~model (Hybrid \gdn~+ \method-1) achieves throughput comparable to Hybrid \gdn~alone (within $6\%$ at 16k context length) while consistently improving model quality across all benchmarks. We consider this a favorable tradeoff, making \method~layers practical when used sparingly.
Our current implementation is based on the Triton DSL \cite{triton} and is relatively unoptimized, leaving significant room for improvement that could further narrow the remaining throughput gap.

\begin{figure}[h!]
    \centering
    \includegraphics[width=0.5\textwidth]{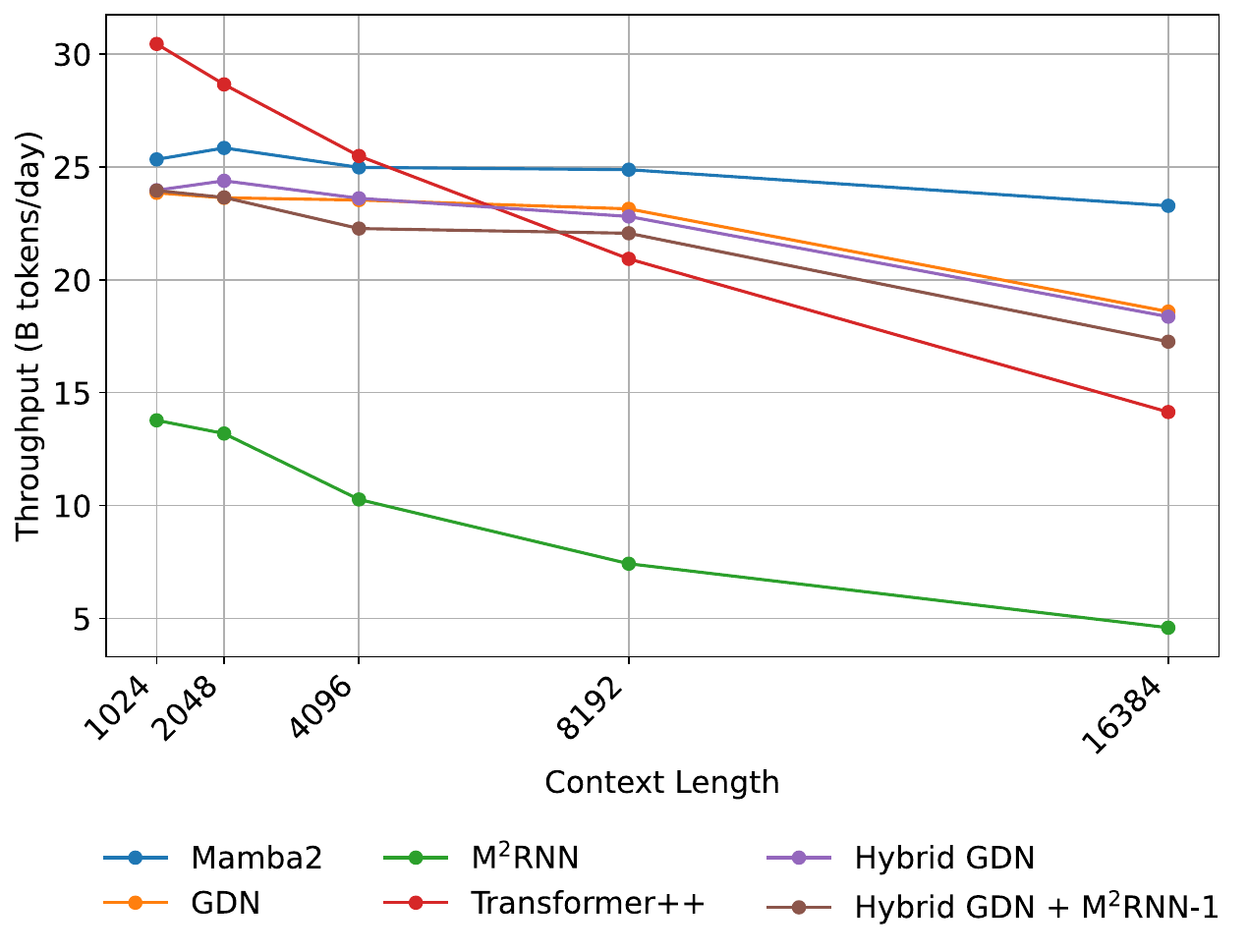}
    \caption{Training throughput measured in billion tokens per day on a machine with 8x NVIDIA HGX H100 GPUs for our 7B MoE configuration.}
    \label{fig:throughput}
\end{figure}

\section{Conclusion}
We introduce \Method~(\method), a non-linear RNN architecture with matrix-valued hidden states that addresses key limitations of existing linear recurrent models. By combining expressive non-linear state transitions with a forget gate and an outer product state expansion, \method~achieves strong state-tracking capabilities.

Our experiments demonstrate that, in the homogeneous setting, \method~is competitive with \mamba~\cite{mamba2} and \gdn~\cite{gdn} at both the 410M dense and 7B (1B active) MoE scales. More importantly, hybrid models combining \method~with attention layers significantly outperform equivalent \mamba~and \gdn~hybrids in language modeling, in-context retrieval, and long-context generalization. We further show that even a single \method~layer yields substantial accuracy gains with only $6\%$ throughput degradation, making it a practical enhancement for existing architectures.

The matrix-valued recurrence in \method~eliminates the FLOPs wasted by padding in vector-valued recurrences when using the FlashRNN algorithm \cite{flashrnn}, enabling efficient tensor core utilization independent of batch size. Combined with our custom kernels, this makes \method~a viable component for training state-of-the-art production language models.

We also present two complementary tensor parallelism strategies for \method. The topology-aware approach adopts a grouped-value formulation that requires no additional communication beyond standard TP, while the topology-independent approach preserves the parameter count regardless of TP world size at the cost of a small number of extra AllReduce operations. Together, these strategies make \method~readily deployable across diverse multi-GPU configurations.


\section*{Acknowledgement}
We would like to thank Songlin Yang, Bharat Runwal and Kevin Li for providing valuable feedback and discussion throughout the duration of this project.
We gratefully acknowledge the support of the Schmidt Sciences AI2050 fellowship, the Google ML and Systems Junior Faculty Awards, and the Google Research Scholar program.

\bibliographystyle{plainnat}
\bibliography{paper}

\appendix
\section{State Size Expansion}
\label{appendix:state-size}
\begin{table}[h!]
\centering
\resizebox{\textwidth}{!}{%
\begin{tabular}{lcccccccc}
\toprule
Attention Pattern & $\bb{W}_q$ & $\bb{W}_k$ & $\bb{W}_v$ & $\bb{W}_g$ & $\bb{W}_f$ & $\bb{W}_o$ & Total ($P$) & State Size ($NKV$) \\
\midrule
Multi-Head & $NK d_{\textrm{m}}$ & $NK d_{\textrm{m}}$ & $NV d_{\textrm{m}}$ & $NV d_{\textrm{m}}$ & $N d_{\textrm{m}}$ & $NV d_{\textrm{m}}$ & $\approx N (2K + 3V) d_{\textrm{m}}$ & $\approx \frac{PKV}{(2K + 3V) d_{\textrm{m}}}$ \\
Multi-Query & $NK d_{\textrm{m}}$ & $K d_{\textrm{m}}$ & $V d_{\textrm{m}}$ & $NV d_{\textrm{m}}$ & $N d_{\textrm{m}}$ & $NV d_{\textrm{m}}$ & $\approx N (K + 2V) d_{\textrm{m}}$ & $\approx \frac{PKV}{(K + 2V) d_{\textrm{m}}}$ \\
Multi-Key & $K d_{\textrm{m}}$ & $NK d_{\textrm{m}}$ & $V d_{\textrm{m}}$ & $NV d_{\textrm{m}}$ & $N d_{\textrm{m}}$ & $NV d_{\textrm{m}}$ & $\approx N (K + 2V) d_{\textrm{m}}$ & $\approx \frac{PKV}{(K + 2V) d_{\textrm{m}}}$ \\
Multi-Value & $K d_{\textrm{m}}$ & $K d_{\textrm{m}}$ & $NV d_{\textrm{m}}$ & $NV d_{\textrm{m}}$ & $N d_{\textrm{m}}$ & $NV d_{\textrm{m}}$ & $\approx 3NV d_{\textrm{m}}$ & $\approx \frac{PK}{3 d_{\textrm{m}}}$ \\
\bottomrule
\end{tabular}
}
\caption{Number of parameters in all linear projections. $N$ denotes the number of heads and $d_{\textrm{m}}$ denotes the model's embedding dimension. For all our models, we use $K = 64$ and $V = 16$ which results in the largest state size for the multi-value attention formulation for constant number of total parameters ($P$).}
\label{table:param-state}
\end{table}

\section{Representation Power of \method~Layers}
\label{appendix:proof}
Theorem \ref{theorem}: The \method~recurrence can represent all tasks representable by non-linear vector-valued RNNs and hence can represent regular languages.

\begin{proof}
    We show that \method~can simulate a vector-valued nonlinear RNN, 
    which is known to recognize all regular languages.

    Consider the parameter setting
    \[
        f_t = 0, 
        \qquad 
        w_r = \mathbf{0}_V,
        \qquad
        q_t = k_t = e_1 \in \mathbb{R}^K,
    \]
    where $e_1 = [1, 0, \dots, 0]^\top$ is the first standard basis vector.

    Under this choice, the matrix-valued recurrence reduces in its first row to
    \begin{align}
        \mathbf{Z}_t 
        &= \tanh\!\left(
            \mathbf{H}_{t-1} \mathbf{W}
            +
            \begin{bmatrix}
                v_t^\top \\
                \mathbf{0}_V^\top \\
                \vdots \\
                \mathbf{0}_V^\top
            \end{bmatrix}
        \right),
    \end{align}
    and the output is obtained from the first row,
    \[
        y_t = (\mathbf{Z}_t)_{1,:}.
    \]

    This is precisely a standard vector-valued nonlinear RNN with $\tanh$ activation applied to the first row of the state matrix. Since vector-valued nonlinear RNNs can represent any regular language, it follows that \method~can represent any regular language.
\end{proof}

\section{RMSNormTP Module}
\label{sec:rmsnormtp}
The forward computation for RMSNorm \cite{rmsnorm} applied on the input activations $x$ is given by:
\begin{align}
    s &= \frac{1}{\sqrt{\frac{1}{d}\sum_{j=1}^{d} x_j^2}} \\
    y_i &= (w_i \odot x_i) s
\end{align}
where $i$ indicates the $i^{th}$ feature of the input or output, $d$ is the number of features and $w$ is some learnable weight for the RMSNorm module. We cache $s$ during the forward computation in HBM for backward computation. Similarly, the backward computation is given by:
\begin{align}
    p_i &= w_i \odot \nabla_{y_i} \mathcal{L}\\
    r &= \sum_{j=1}^{d}p_j x_j\\
    \nabla_{x_i} \mathcal{L} &= s p_i - \frac{rs^3x_i}{d} \\
    \nabla_{w_i} \mathcal{L} &= (\nabla_{y_i} \mathcal{L} \odot x_i)s
\end{align}

To use RMSNorm \cite{rmsnorm} module with \method~layers with Tensor Parallelism (TP), we shard the weight $w$ for RMSNorm among the GPUs participating in TP. Since the incoming activations $x$ are also sharded along the feature dimension, to compute the inverse of the normalization term $s$, we require an AllReduce operation in the forward. We propose the RMSNormTP module which modifies the RMSNorm module to compute $s$ in forward and $r$ in backward as:
\begin{align}
    \label{eqn:s-all}
    s &= \frac{1}{\sqrt{\frac{1}{d} {\color{blue}\sum_{n=1}^{N_\textrm{TP}}} \sum_{j \in D_n} x_j^2}} \\
    \label{eqn:r-all}
    r &= {\color{blue}{\sum_{n=1}^{N_\textrm{TP}}}} \sum_{j \in D_n}p_j x_j
\end{align}
where $D_n$ indicates the sharded features on $n^{th}$ GPU for TP. The summation in \blue{blue colour} indicates an AllReduce operation across GPUs in Equations \ref{eqn:s-all}, \ref{eqn:r-all}. Note that for both AllReduce operations, we only need to communicate a single element.

\end{document}